%%%%%%%%%%%%%%%%%%%%%%% file template.tex %%%%%%%%%%%%%%%%%%%%%%%%%
%
% This is a general template file for the LaTeX package SVJour3
% for Springer journals.          Springer Heidelberg 2010/09/16
%
% Copy it to a new file with a new name and use it as the basis
% for your article. Delete % signs as needed.
%
% This template includes a few options for different layouts and
% content for various journals. Please consult a previous issue of
% your journal as needed.
%
%%%%%%%%%%%%%%%%%%%%%%%%%%%%%%%%%%%%%%%%%%%%%%%%%%%%%%%%%%%%%%%%%%%
%
% First comes an example EPS file -- just ignore it and
% proceed on the \documentclass line
% your LaTeX will extract the file if required
%
\RequirePackage{fix-cm}
\documentclass[smallextended]{svjour3}       % onecolumn (second format)
\smartqed  % flush right qed marks, e.g. at end of proof
\usepackage{graphicx}
\usepackage{lineno}
\usepackage{amssymb, graphicx, amsmath}
\usepackage{comment}

\usepackage[acronym]{glossaries}
\usepackage{rotating} % Rotating table
\usepackage[show]{ed}

\newcommand{\eq}[1]{\begin{align*}#1\end{align*}}
\newcommand{\eqn}[1]{\begin{align}#1\end{align}}

\usepackage[authoryear]{natbib}

%
% \usepackage{mathptmx}      % use Times fonts if available on your TeX system
%
% insert here the call for the packages your document requires
%\usepackage{latexsym}
% etc.
%
% please place your own definitions here and don't use \def but
% \newcommand{}{}
%
% Insert the name of "your journal" with
% \journalname{myjournal}
%
\begin{document}
%%%%%%%%%
% ACRONYMS %
%%%%%%%%%

\newacronym{auc}{AUC}{area under the ROC curve}
\newacronym{roc}{ROC}{receiver operating characteristic}
\newacronym{cindex}{C-index}{concordance index}
\newacronym{loo}{LOO}{leave-one-pair-out}
\newacronym{rmse}{RMSE}{root mean squared error}
\newacronym{mse}{MSE}{mean squared error}
\newacronym{mae}{MAE}{mean absolute error}
\newacronym{gip}{GIP}{Gaussian interaction profile}
\newacronym{R2}{$R^2$}{fraction of explained variance}
\newacronym{krr}{KRR}{kernel ridge regression}
\newacronym{seplr}{SEP-LR}{separable linear relational}
\newacronym{lpcs}{LPCS}{Labeled Point Cloud Superposition}
\newacronym{mcs}{MCS}{Maximum Common Subgraph}
\newacronym{cb}{CB}{CavBase}
\newacronym{fp}{FP}{Fingerprints}
\newacronym{sw}{SW}{Smith-Waterman}
\newacronym{pca}{PCA}{principal component analysis}
\newacronym{tss}{TSS}{two-step single hyperparameter}
\newacronym{svd}{SVD}{singular value decomposition}

\newtheorem{mycol}{Corollary}%[theorem]
%\newtheorem{theorem}{Theorem}
%\newtheorem{definition}{Definition}
%\newtheorem{property}{Property}

%\algnewcommand\algorithmicinput{\textbf{INPUT:}}
%\algnewcommand\INPUT{\item[\algorithmicinput]}

%\renewcommand{\rotate}[1]{\rotatebox[origin=c]{90}{#1}}

%%%%%%%%%%
% SHORTCUTS %
%%%%%%%%%%
%\DeclareMathOperator{\argmin}{argmin}

\newcommand*\alternative[1]{\bar{#1}}  % used to denote variation of an object (cfr prime)

\newcommand{\supsc}[1]{\ensuremath{^{\text{#1}}}}   % Superscript in tekst
\newcommand{\subsc}[1]{\ensuremath{_{\text{#1}}}}   % Subscript in tekst

\newcommand{\lsize}{n}  % number of instances
\newcommand{\osize}{m}  % number of train object of u
\newcommand{\qsize}{q}  % number of train objects of v
\newcommand{\testsize}{{n'}}
\newcommand{\odsize}{d}  
\newcommand{\qdsize}{r}
\newcommand{\rank}{R}
\newcommand{\partialu}{\mathbf{p}}
\newcommand{\partialv}{\mathbf{q}}
\newcommand{\realset}{\mathbb{R}}
\newcommand{\ispace}{\mathcal{X}}
\newcommand{\Vspace}{\mathcal{V}}
\newcommand{\loss}{\mathcal{L}}
\newcommand{\Uspace}{\mathcal{U}}
\newcommand{\anyspace}{\mathcal{X}}
\newcommand{\hypspace}{\mathcal{H}}
\newcommand{\labelmatrix}{\mathbf{Y}}
\newcommand{\predmatrix}{\mathbf{F}}
\newcommand{\objectu}{u}
\newcommand{\objectv}{v}
\newcommand{\objectx}{x}
\newcommand{\kernelf}{k}
\newcommand{\repru}{\boldsymbol{\phi}}
\newcommand{\reprv}{\boldsymbol{\psi}}
\newcommand{\gkernelf}{g}
\newcommand{\kkernelf}{\Gamma}
\newcommand{\hatmatrix}{\mathbf{H}}
\newcommand{\dkernelm}{\bm{K}}
\newcommand{\tkernelm}{\bm{G}}
\newcommand{\kkernelm}{\bm{\Gamma}}
\newcommand{\regparam}{\lambda}
\newcommand{\idmatrix}{\mathbb{I}}
\newcommand{\expectation}{\mathbb{E}}
\newcommand{\ones}{\mathbb{J}}
\newcommand{\trace}{\textnormal{tr}}
\newcommand{\transpose}{^\top}
\newcommand{\bm}[1]{\mathbf{#1}}
\newcommand{\ve}{\textnormal{vec}}
\newcommand{\mat}{\textnormal{mat}}
\newcommand{\diagv}{\textnormal{diag}_v}
\newcommand{\diagm}{\textnormal{diag}_m}
\newcommand{\LOO}{\mathrm{LOO}}
\newcommand{\predfun}{f}
\newcommand{\labely}{y}
\newcommand{\filterfun}{\varphi}
\newcommand{\uset}{U}
\newcommand{\database}{U'}
\newcommand{\vset}{V}
\newcommand{\edgeset}{E}
\newcommand{\edgespace}{\mathcal{E}}
\newcommand{\trainset}{S}
\newcommand{\testset}{T}
\newcommand{\labelvec}{\bm{y}}
\newcommand{\predvec}{\bm{f}}
\newcommand{\relation}{Q}
\newcommand{\topkset}{S^K_\objectv}

\newcommand{\argmax}[1]{\underset{#1}{\operatorname{arg}\,\operatorname{max}}\;}
\newcommand{\argmin}[1]{\underset{#1}{\operatorname{arg}\,\operatorname{min}}\;}

\title{A Comparative Study of Pairwise Learning Methods based on Kernel Ridge Regression%\thanks{Grants or other notes
%about the article that should go on the front page should be
%placed here. General acknowledgments should be placed at the end of the article.}
}
%\subtitle{Do you have a subtitle?\\ If so, write it here}

%\titlerunning{Short form of title}        % if too long for running head

\author{Michiel Stock        \and
        Tapio Pahikkala  \and
        Antti Airola \and
        Bernard De Baets \and
        Willem Waegeman
}

%\authorrunning{Short form of author list} % if too long for running head

\institute{KERMIT, Michiel Stock, Bernard De Baets and Willem Waegeman \at
              Department of Mathematical Modelling, Statistics and Bioinformatics, Ghent University, Belgium \\          \\%  \\
          Tapio Pahikkala and Antti Airola \at
              Department of Information Technology, University of Turku, Finland \\ \\
              contact: \email{michiel.stock@ugent.be} 
              }

\date{Received: date / Accepted: date}
% The correct dates will be entered by the editor

\maketitle

\begin{abstract}
Many machine learning problems can be formulated as predicting labels for a pair of objects. Problems of that kind are often referred to as pairwise learning, dyadic prediction or network inference problems. During the last decade kernel methods have played a dominant role in pairwise learning. They still obtain a state-of-the-art predictive performance, but a theoretical analysis of their behavior has been underexplored in the machine learning literature. 

In this work we review and unify existing kernel-based algorithms that are commonly used in different pairwise learning settings, ranging from matrix filtering to zero-shot learning. To this end, we focus on closed-form efficient instantiations of Kronecker kernel ridge regression. We show that independent task kernel ridge regression, two-step kernel ridge regression and a linear matrix filter arise naturally as a special case of Kronecker kernel ridge regression, implying that all these methods implicitly minimize a squared loss. In addition, we analyze universality, consistency and spectral filtering properties. Our theoretical results provide valuable insights in assessing the advantages and limitations of existing pairwise learning methods. %%%%%%%%%%%
\keywords{Pairwise learning \and Dyadic prediction \and Kernel methods \and Learning theory}
% \PACS{PACS code1 \and PACS code2 \and more}
% \subclass{MSC code1 \and MSC code2 \and more}
\end{abstract}
\section{Introduction to pairwise learning}\label{KMmultitask}
\subsection{Settings in pairwise learning}\label{sec:pwpredset}
Many real-world machine learning problems can naturally be represented as pairwise learning or dyadic prediction problems. In contrast to more traditional learning settings, the goal here consists of making predictions for pairs of objects $\objectu\in\Uspace$ and $\objectv\in\Vspace$, as elements of two universes $\Uspace$ and $\Vspace$. Such an ordered pair $(\objectu,\objectv)$ is often referred to as a dyad, and both elements in the dyad are usually equipped with a feature representation. In contrast to many statistical settings, these dyads are not independently and identically distributed, as the same objects tend to appear many times as part of different pairs.

Applications of pairwise learning often arise in the life sciences, such as predicting various types of interactions in all sorts of biological networks (e.g. drug-target networks, gene regulatory networks and species interaction networks). Similarly, pairwise learning methods are also used to extract novel relationships in social networks, such as author-citation networks. Other popular applications include recommender systems (predicting interactions between users and items) and information retrieval (predicting interactions between search queries and search results). 

Formally speaking, in pairwise learning one attempts to learn a function of the form $\predfun(\objectu,\objectv)$, i.e.\ a function to predict properties of two objects. Such functions are fitted using a set of $\lsize$ labeled examples: the training set $\trainset=\{(\objectu_h, \objectv_h, \labely_h) \mid h=1,\ldots,\lsize\}$. Further on, $\uset=\{\objectu_i \mid i=1,\dots,\osize\}$ and $\vset=\{\objectv_j \mid j=1,\ldots, \qsize\}$ will denote the sets of distinct objects of both types, later refered to as instances and tasks, respectively, in the training set with $\osize=\arrowvert\uset\arrowvert$ and $\qsize=\arrowvert\vset\arrowvert$.

Pairwise learning holds strong connections with many other machine learning settings. Especially a link with multi-task learning can be advocated, by calling the first object of a dyad an `instance' and the second object a `task'. 
%As a typical multi-task learning example, consider ten schools providing the grades for a selection of their students. Suppose we want to predict the scores for new students in a school. The most straightforward path would be to fit a different model for each school, which uses features of the students. Intuitively, we might do better by building one model which takes all information into account. This can be done by learning a general model that takes both the student and the school as input. The prediction function is thus pairwise: it takes both an instance and a task as input. 
The underlying idea for making the distinction between instances and tasks is that the feature description of the instances is often considered as more informative, while the feature description of the tasks is mainly used to steer learning in the right direction. Albeit less common in traditional multi-task learning formulations, feature representations for tasks play a crucial role in recent paradigms such as zero-shot learning -- see e.g.~\citet{Palatucci2009,Lampert2014}. 

The connection between pairwise learning and multi-task learning allows one to distinguish different prediction settings that are crucial in the context of the paper. Formally, four settings for predicting the label of the dyad $(\objectu,\objectv)$ can be distinguished in pairwise learning, based on whether testing objects are in-sample (appear in the training data) or out-of-sample (do not appear in the training data):
\begin{itemize}
\item {\bf Setting A}: Both $\objectu$ and $\objectv$ are observed during training, as parts of different dyads, but the label of the dyad $(\objectu,\objectv)$ must be predicted; %\\
\item {\bf Setting B}: Only $\objectv$ is known during training, while $\objectu$ is not observed in any training dyad, and the label of the dyad $(\objectu,\objectv)$ must be predicted; %\\
\item {\bf Setting C}: Only $\objectu$ is known during training, while $\objectv$ is not observed in any training dyad, and the label of the dyad $(\objectu,\objectv)$ must be predicted;  %\\
\item {\bf Setting D}: Neither $\objectu$ nor $\objectv$ occur in any training dyad, and the label of the dyad $(\objectu,\objectv)$ must be predicted. %\\
\end{itemize}

Figure~\ref{fig:settings} visualizes data of the four settings graphically in four matrix representations. Setting A resembles a matrix completion or matrix filtering scenario, as typically encountered in collaborative filtering problems. In principle, feature representations are not needed if the structure of the matrix is exploited to generate predictions, but additional information might be helpful. Setting B resembles a classical multi-task learning scenario, where the columns represent instances and the rows tasks. For a predefined set of tasks, one aims for predicting the labels of novel instances. Setting C then considers the converse setting, where the instances are all known during training and some tasks are unobserved. This setting is in essence identical to Setting B, if one interchanges the notions of task and instance. Setting D is the most difficult prediction setting of all four. In the multi-task learning literature, this setting is known as zero-shot learning, as one aims for predicting the labels of tasks with zero training data.     

In pairwise learning, it is extremely important to distinguish these four prediction scenarios. Without bearing them in mind, one might select the wrong model for the given scenario or obtain an under- or overestimation of the generalization error. For example, a pairwise recommender system that can generalize well to new users might perform poorly for new items. In a large-scale meta-study about biological network identification, it was found that these concepts are vital to correctly evaluate pairwise learning models~\citep{Park2012}. Certain properties of different models discussed in this work only hold for certain settings.

\begin{figure}[t]
   \begin{center}
   \includegraphics[scale=0.8]{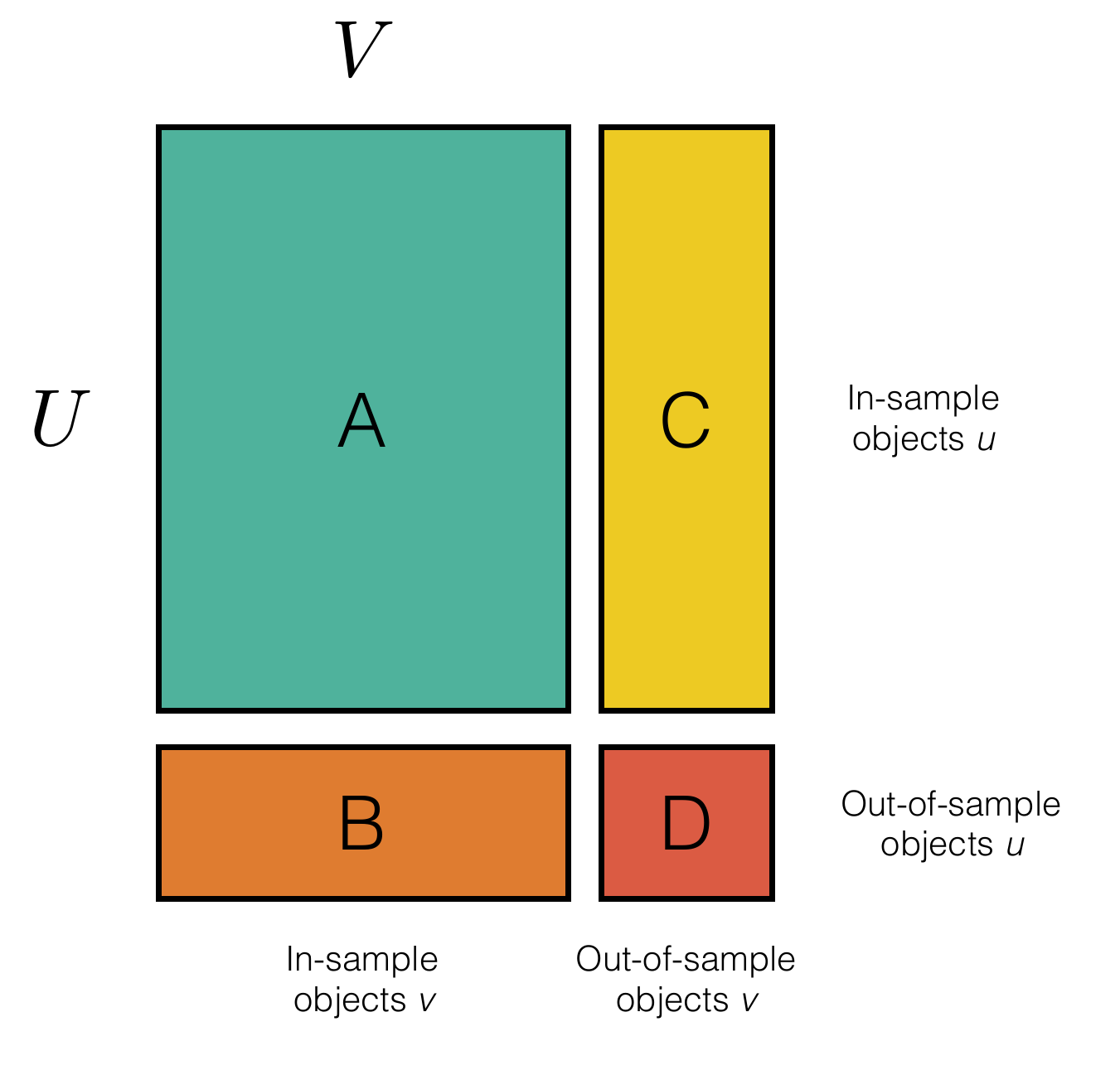} % requires the graphicx package
   \end{center}
   \caption{The different prediction settings in pairwise learning, depending on whether objects in a dyad occurred in the training set or not. Those four settings are further in this article always referred to as Setting A, B, C and D, respectively. }
   \label{fig:settings}
\end{figure}

\subsection{Kernel methods for pairwise learning with complete datasets}
During the last decade various types of methods for pairwise learning have been proposed in the literature.  Kernel methods in particular have been extensively used -- see e.g.\ \citet{Vert2005,Zaki2009,Huynh-Thu2010,VanLaarhoven2011,Cao2012,Liu2015a}. Especially in bio-informatics applications they have been popular, because biological entities are often more easy to represent in terms of similarity scores than feature representations~\citep{Ben-Hur2005,Shen2007,Vert2007}.\footnote{Recent advances in convolutional neural networks, however, have resulted in intriguing ways to generate representations for molecules~\citep{Duvenaud2015}, proteins~\citep{Jo2015} and nucleic acids~\citep{Alipanahi2015a}. Such feature representations, obtained by pretraining on large datasets, will likely be replace kernel methods in the future, at least to some extend.}

In this work, we will focus on kernel methods for pairwise learning. We believe that kernel methods have a number of appealing properties: 
\begin{itemize}
\item First, the existing methods that we analyze in this paper are general-purpose methods. They can be applied to a wide range of settings, including Settings~ A, B, C and D, and a wide range of application domains. More recent methods might outperform kernel methods in specific situations, but they are usually not applicable to Settings A, B, C and D at the same time, or they are mainly developed for specific application domains with very specific types of datasets, e.g.\ computer vision and text mining datasets. 
\item Second, the methods that we analyze often form an essential building block of more recent (and more complicated) methods. This is, for example, the case for zero-shot learning methods in computer vision. It is therefore important to provide a theoretical analysis of older methods, in order to gain a better understanding of more recent methods that are often black-box engineering approaches. More details on this aspect will be given in a related work section at the end of this article.   
\item Third, the methods that we analyze in this paper are still clear winners for specific scenarios. One of those scenarios is cross-validation in pairwise learning, for which kernel methods outperform other methods substantially w.r.t.\ computational scalability. Furthermore, scalable and exact algorithms can be derived to learn a model online or when the dataset is not complete (see Definition~\ref{def:complete} below). For more information on these aspects, we refer the reader to our complementary work~\citep{Stock2017exactiterative,Stock2017phd,Stock2018cvshortcuts}. 
\end{itemize}

These three reasons are the key motivations why it remains important to study kernel-based pairwise learning methods from a theoretical perspective. The key idea to extend kernel methods to pairwise learning is to construct so-called pairwise kernels, 
%$\Gamma((\objectu, \objectv), (\alternative{\objectu}, \alternative{\objectv}))$, 
which measure the similarity between two dyads $(\objectu, \objectv)$ and $(\alternative{\objectu}, \alternative{\objectv})$. Kernels of that kind can be used in tandem with any conventional kernelized learning algorithm, such as support vector machines, \gls{krr} and kernel Fisher discriminant analysis.
In this article we will particularly focus on pairwise learning methods that are inspired by kernel ridge regression. Due to the algebraic properties of such methods, they are especially useful when analyzing so-called complete datasets in pairwise learning. 

\begin{definition}[Complete dataset] \label{def:complete}
A training set is called complete if it contains exactly one labeled example for every dyad $(\objectu,\objectv)\in\uset\times\vset$.
\end{definition}

If the label matrix contains only a few missing labels, matrix imputation methods can be applied to render the matrix complete~\citep{Mazumder2010,Stekhoven2012,Zachariah2012a}. Complete datasets, however, occur frequently, for example in biological networks such as drug-protein interactions or species interactions. Here, screenings or field studies generate a set of observed interactions, while interactions that or not observed are either interactions not occurring or false negatives~\citep{Schrynemackers2013,Jordano2016}. In such cases, the positive instances are labeled as 1 whereas the negatives are labeled 0. Theoretical work by~\citet{Elkan2008} has shown that models can still be learned from such datasets. Outside of biological network inference, complete datasets occur in recommender systems with implicit feedback, for example buying a book can be seen as a proxy for liking a book~\citep{Isinkaye2015}. Setting~A, i.e.~re-estimating labels, is still relevant for such datasets if the labels are noisy or contain false positives or false negatives. A pairwise learning model can be used to detect and curate such errors.

For a complete training set we introduce a further notation for the matrix of labels $\bm{Y}\in\mathbb{R}^{\osize\times\qsize}$, so that its rows are indexed by the objects in $\uset$ and the columns by the objects in $\vset$. Furthermore, we use $\bm{Y}_{i.}$, resp.\ $\bm{Y}_{.j}$, to denote the $i$-th row, resp.\ $j$-th column, of $\bm{Y}$. The vectorization of the matrix $\bm{Y}$ by stacking its columns in one long vector will be denoted $\bm{y}$.

\subsection{Scope and objectives of this paper}

The goal of this paper is to provide theoretical insights into the working of existing pairwise learning methods that are based on kernel ridge regression. To this end, we will focus on scenarios with complete training datasets, while analyzing the behavior for Settings~A, B, C and D. More specifically, we intend to provide an in-depth discussion of the following four methods: 

\begin{itemize}
\item Kronecker kernel ridge regression: adopting a least-squares formulation, this method is representative for many existing systems that are based on pairwise kernels. 
\item Two-step kernel ridge regression: this is a recent method that has some interesting properties such as simplicity and computational efficiency. The method has been independently proposed in \citet{Pahikkala2014} and \citet{Romera-paredes2015}. Also a variant of it exists, in which tree-based methods replace kernel ridge regression as base learners \citep{Schrynemackers2015}.  In a statistical context, similar models have been developed for structural equation modelling~\citep{Bollen1996,Bollen2004,Jung2013}.
\item Linear matrix filtering: this is a recently-proposed method that is able to provide predictions in Setting A without the need for object features, similar to collaborative filtering methods. Though simple, this linear filter was found to perform very well to predict interactions in a variety of species-species and protein-ligand interaction datasets~\citep{Stock2016a,Stock2017phd}. On these datasets it outperforms standard matrix factorization methods, and it is very tolerant to a large number of false negatives in the label matrices.  
\item Independent-task kernel ridge regression: this method serves as a baseline and a building block for some of the other methods. This approach resembles the traditional kernel ridge regression method, applied to each task (i.e.\ each column of $\bm{Y}$) separately.  When the method is applied to a single task, we will speak of single-task kernel ridge regression. 
\end{itemize}

We will review these four models in Section~2. They can all be represented using two positive semidefinite kernel functions, on for each type of objects, i.e.~$\kernelf:\Uspace\times\Uspace\rightarrow\realset$ and $\gkernelf:\Vspace\times\Vspace\rightarrow\realset$. These capture the similarity between two objects of the same types. We will deal with prediction functions of the form:
\eqn{
\predfun(\objectu, \objectv) = \sum_{i=1}^\osize \sum_{j=1}^\qsize a_{ij}  \kernelf(\objectu,\objectu_i) \gkernelf(\objectv,\objectv_j)\,,\label{eq:pairwisepredictionfunction}
}
with $\mathbf{A}=[a_{ij}] \in \realset^{\osize\times\qsize}$ the dual parameters. Such a model can, for instance, be obtained by the pairwise Kronecker kernel in a kernel-based learning algorithm such as support vector machines -- e.g.~\citet{Vert2007,Brunner2012}. In this work, we will limit ourselves to models where the dual parameters can be written as a linear combination of the label matrix:
\eqn{
\ve (\mathbf{A}) = \mathbf{B} \ve (\labelmatrix)\,.\label{eq:pairwiseparameter}
}
Here, $\mathbf{B}\in\realset^{\osize\qsize\times\osize\qsize}$ is a matrix constructed based on the training objects $\uset$ and $\vset$, the kernel functions and the learning algorithm, but \emph{not} the labels of the pairs. Similarly, the matrix containing the predictions $\predmatrix$ associated with the labels can be obtained by
\eqn{
\ve (\predmatrix) = \hatmatrix \ve (\labelmatrix)\,, \label{eq:pairwisehatmatrix}
}
where $\mathbf{H}\in\realset^{\osize\qsize\times\osize\qsize}$ is the so-called hat matrix which maps observations to predictions~\citep{hastie01statisticallearning}. Although $\mathbf{B}$ and $\mathbf{H}$ are huge matrices for problems of even modest sizes (i.e.~if $|\uset|$ and $|\vset|$ are in the order of thousands, these matrices have a cardinality of millions), for several methods the parameters and predictions can be computed efficiently. More specifically, the learning algorithms discussed in this work scale with the number of objects rather than the number of labels. 

The learning properties of the above four methods are theoretically analyzed in Section~\ref{theoreticalconsiderations}. In a first series of results, we establish equivalences via special kernels and algebraic operations. We discuss several links that are specific for Settings A, B, C or D. Figure~\ref{fig:overview} gives an overview of what the reader might expect to learn. In a second series of results we prove the universality of Kronecker product pairwise kernels, and we analyze the consistency of the algorithms that can be derived from such kernels. To this end, we provide a spectral interpretation of Kronecker and two-step kernel ridge regression. This will give further insights into the behavior of these methods.  

\begin{figure}[t]
   \begin{center}
   \includegraphics[scale=0.6]{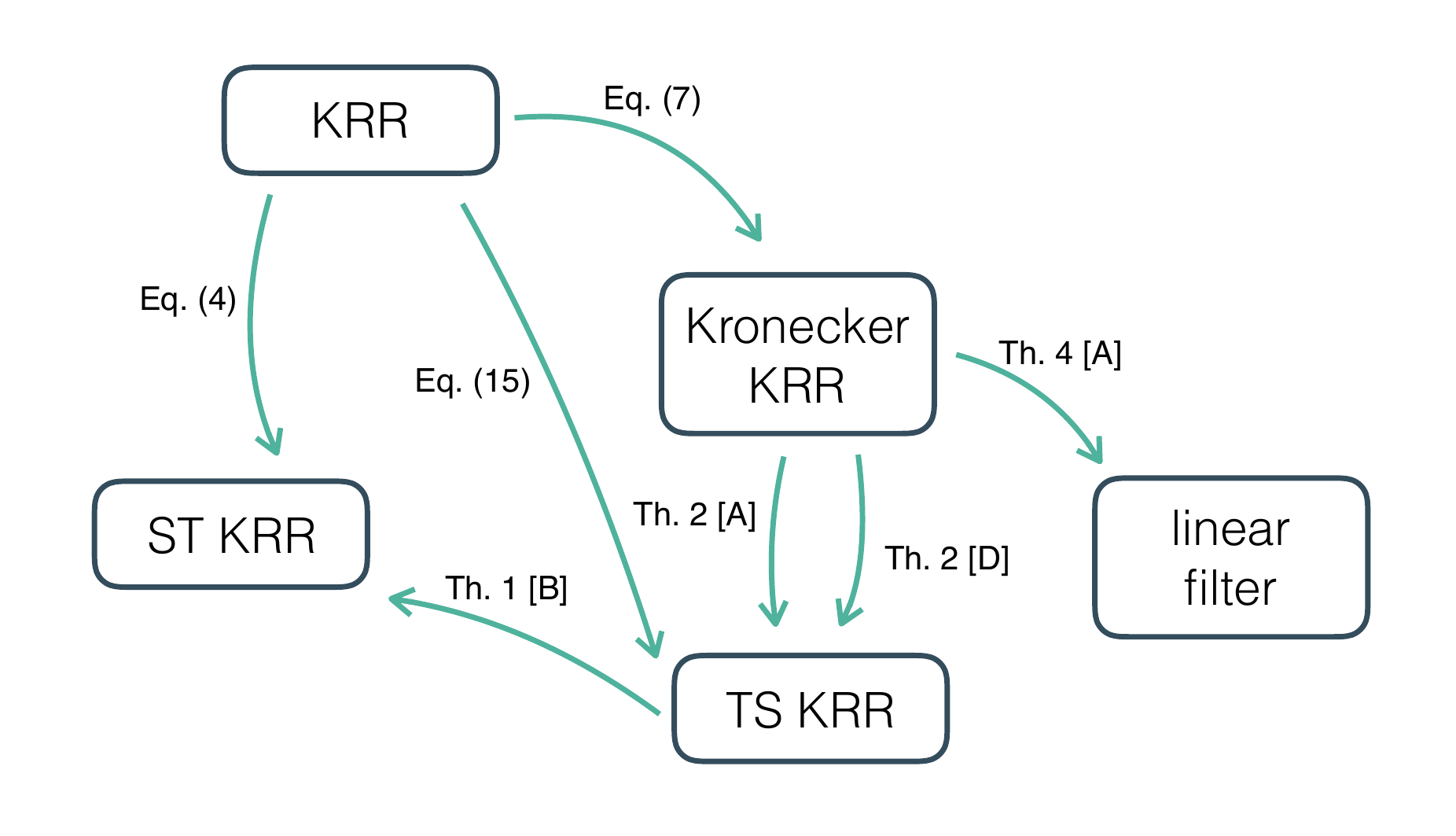} % requires the graphicx package
   \end{center}
   \caption{Overview of the different methods discussed in this work and their relation to one other: KRR = kernel ridge regression, ST KRR = single-task kernel ridge regression, TS KRR = two-step kernel ridge regression. The letters between brackets indicate the settings for which the theorem holds, as shown in Figure~\ref{fig:settings}.}
   \label{fig:overview}
\end{figure}

\section{Pairwise learning with methods based on kernel ridge regression} \label{dyadicprediciton}

%Pairwise learning has strong connections with many other machine learning settings. Especially a link with multi-task learning can be advocated, by calling the first object of the dyad an `instance' and the second object a `task'. As a typical multi-task learning example, consider ten schools providing the grades for a selection of their students. Suppose we want to predict the scores for new students in a school. The most straightforward path would be to fit a different model for each school, which uses features of the students. Intuitively, we might do better by building one model which takes all information into account. This can be done by learning a general model that takes both the student and the school as input. The prediction function is thus pairwise: it takes both an instance and a task as input. In multi-task learning, the underlying idea for making the distinction between instances and tasks is that the feature description of the instances is often considered as more informative for making a prediction, while the feature description of the tasks is mainly used to steer learning into the right direction. In the majority of multi-task learning methods, a feature description for tasks is not given, though often the idea that the tasks can be clustered or are located on a low-dimensional manifold is exploited. In this section we will develop the different models for pairwise learning and we occasionally use such multi-task learning terminology.\ednote{Needed here?}

In this section we formally review the four methods that were outlined in the introduction. We start by explaining a baseline multi-task learning formulation that will be needed to understand more complicated methods. We call this method independent-task kernel ridge regression, since it constructs independent models for the different tasks, i.e.\ the different columns of $\bm{Y}$. Subsequently, we elaborate on Kronecker kernel ridge regression as an instantiation of a method that employs pairwise kernels. In the last two paragraphs two-step kernel ridge regression and the linear matrix filter are reviewed.  In what follows we adopt a multi-task learning formulation, in which the objects of $\Uspace$ and $\Vspace$ are referred to as instances and tasks, respectively. 

\subsection{Independent-task kernel ridge regression}

Suppose that only features of objects of type~$\Uspace$ are available, but not of type $\Vspace$. Since there is no information available on how the tasks are related, a separate model for each task is trained. Let $\bm{Y}_{.j}\in\mathbb{R}^\osize$ be the labels of task $\objectv_j$ and $\kernelf(\cdot,\cdot)$ be a suitable kernel function what quantifies the similarity between the different instances. Since a separate and independent model is trained for each task, we will denote this setting as independent task (IT) kernel ridge regression. For each task $\objectv_j$, one would like to learn a function of the form
\eq{
\predfun_j^{\mathrm{IT}}(\objectu) = \sum_{i=1}^{\osize}
a_{ij}^\mathrm{IT} \kernelf(\objectu,\objectu_i) \,,
}
with $a_{ij}^\mathrm{IT}$ parameters that minimize a suitable objective function. In the case of \gls{krr}, this objective function is the squared loss with an $L_2$-complexity penalty. The parameters for the individual tasks using \gls{krr} can be found jointly by minimizing the following objective function~\citep{Wahba1990,Bishop2006}:%\ednote{make connection to chapter 2}
\eqn{
J(\bm{A}^\mathrm{IT}) = \trace [{(\dkernelm\bm{A}^\mathrm{IT} - \bm{Y})\transpose(\dkernelm\bm{A}^\mathrm{IT} - \bm{Y})}] + \regparam_\objectu \trace[{\bm{A}^\mathrm{IT}}\transpose\dkernelm\bm{A}^\mathrm{IT}]\, , \label{KRRpenalty}
}
with $\trace(\cdot)$ the trace, $\bm{A}^\mathrm{IT}=[a_{ij}^\mathrm{IT}]\in\mathbb{R}^{\osize\times\qsize}$ and $\dkernelm \in\mathbb{R}^{\osize\times\osize}$ the Gram matrix associated with the kernel function $\kernelf(\cdot,\cdot)$ for the instances and $\regparam_\objectu$ a regularization parameter. For simplicity, we assume the same regularization parameter $\regparam_\objectu$ for each task $\objectv$, though extensions to different penalties for different tasks are straightforward. This basic setting assumes no crosstalk between the tasks, as each model is fitted independently. The optimal coefficients that minimize Eq.\ (\ref{KRRpenalty}) can be found by solving the following linear system:
\eqn{\label{systemKRR}
\left(\dkernelm+\regparam_\objectu\idmatrix\right)\bm{A}^\mathrm{IT}=\bm{Y}\,.
}
Using the singular value decomposition of the Gram matrix, this system can be solved for any value of $\regparam_\objectu$ with a time complexity of $\mathcal{O}(\osize^3 +\osize^2 \qsize)$.

\subsection{Pairwise and Kronecker kernel ridge regression}\label{KMkroneckerkernel}

Suppose one does have prior knowledge about which tasks are more similar, quantified by a kernel function $\gkernelf(\cdot, \cdot)$ defined over the tasks. Several authors (see \citet{Alvarez2012,Baldassarre2012} and references therein) have extended \gls{krr} to incorporate task correlations via matrix-valued kernels. However, most of this literature concerns kernels for which the tasks are fixed at training time. An alternative approach, allowing for the generalization to new tasks more straightforwardly by means of such a task kernel, is to use a pairwise kernel $\kkernelf\left(\left(\objectu,\objectv\right),\left(\alternative{\objectu},\alternative{\objectv}\right)\right)$. Pairwise kernels provide a prediction function of the type
\eqn{\label{tensorprediciton}
\predfun(\objectu,\objectv) &= \sum_{h=1}^{\lsize}
\alpha_h\kkernelf\left(\left(\objectu,\objectv\right),\left({\objectu}_h,{\objectv}_h\right)\right) \,,
}
where $\boldsymbol{\alpha}= [\alpha_h]$ are parameters that minimize the same objective function as in (\ref{KRRpenalty}):
\eqn{\label{tikhonov}
J(\bm{\boldsymbol\alpha})=(\bm{\kkernelm}\bm{\boldsymbol\alpha} -\bm{y})\transpose(\bm{\kkernelm}\bm{\boldsymbol\alpha} -\bm{y})+\regparam\bm{\boldsymbol\alpha }\transpose\kkernelm\bm{\boldsymbol\alpha} \,,
}
with $\kkernelm$ the pairwise Gram matrix. The minimizer can also be found by solving a system of linear equations:
\eqn{\label{kronsystem}
\left(\kkernelm+\regparam\idmatrix\right)\bm{\boldsymbol\alpha} =\bm{y}\,.
}
%A special case of Kronecker kernel ridge regression emerges when the objective function (\ref{tikhonov}) has no regularization term:
%\eqn{\label{kkols}
%J(\bm{\boldsymbol\alpha}^\mathrm{KKOLS} )=(\bm{\kkernelm}\bm{\boldsymbol\alpha}^\mathrm{KKOLS} -\bm{y})\transpose(\bm{\kkernelm}\bm{\boldsymbol\alpha}^\mathrm{KKOLS} -\bm{y})\,.
%}
%We will refer to this as ordinary Kronecker kernel least-squares. A minimizer of Eq.~(\ref{kkols}) is given by
%\eq{
%\kkernelm\bm{\boldsymbol\alpha}^\mathrm{KKOLS} =\bm{y}\,.
%}
The most commonly used pairwise kernel is the Kronecker product pairwise kernel \citep{Basilico2004,oyama2004using,Benhur2005,park2009pairwise,Hayashi2012,Bonilla2007,pahikkala2013conditional}. This kernel is defined as
\eqn{\label{pairwisekernel}
\kkernelf^\mathrm{KK}\left(\left(\objectu,\objectv\right),\left(\alternative{\objectu},\alternative{\objectv}\right)\right)=\kernelf\left(\objectu,\alternative{\objectu}\right)\gkernelf\left(\objectv,\alternative{\objectv}\right)\,,
}
a product of the data kernel $\kernelf(\cdot,\cdot)$ and the task kernel $\gkernelf(\cdot,\cdot)$. Many other variations of pairwise kernels have been considered to incorporate prior knowledge on the nature of the relations (e.g.~\citet{Vert2007,pahikkala2010intransitive,Waegeman2012,pahikkala2013conditional}) or for more efficient calculations in certain settings -- e.g.~\citet{Kashima2010}.

Let $\bm{\tkernelm}\in\mathbb{R}^{\qsize\times\qsize}$ be the Gram matrix for the tasks. Then, for a complete training set, the Gram matrix for the instance-task pairs is the Kronecker product $\kkernelm=\bm{\tkernelm}\otimes\bm{\dkernelm}$. Often it is infeasible to use this kernel directly due to its large size. The prediction function (\ref{tensorprediciton}) can be written as
\eqn{\label{pairwisedual}
\predfun^\mathrm{KK}(\objectu, \objectv)=\sum_{i=1}^\osize \sum_{j=1}^\qsize a_{ij}^\mathrm{KK} \kernelf(\objectu,\objectu_i) \gkernelf(\objectv,\objectv_j)\,.
}
The matrix $\mathbf{F}$ containing the predictions for the training data using a pairwise kernel can be obtained by a linear transformation of the training labels:
\eqn{
\ve(\mathbf{F})&= \kkernelm \ve(\bm{A}^\mathrm{KK})\\
&= \kkernelm\left(\kkernelm+\regparam\idmatrix\right)^{-1}\ve(\bm{Y})\label{PAIRWISEREEST}\\
&=\bm{H}^\kkernelf\ve(\bm{Y})\label{hatk} \,.
}
As a special case of Kronecker \gls{krr}, we also retrieve ordinary Kronecker kernel least-squares (OKKLS), when the objective function of Eq.~(\ref{tikhonov}) has no regularization term (i.e.~$\lambda=0$).

Several authors have pointed out that, while the size of the system in Eq.~(\ref{kronsystem}) is considerably large, its solutions for the Kronecker product kernel can be found efficiently via tensor algebraic optimization \citep{VanLoan2000,martin2006shiftedkron,Kashima2009,Raymond2010scalable,pahikkala2013conditional,Alvarez2012}. This is because the eigenvalue decomposition of a Kronecker product of two matrices can easily be computed from the eigenvalue decomposition of the individual matrices. The time complexity scales roughly with $\mathcal{O}(\osize^3+\qsize^3)$, which is required for computing the singular value decomposition of $\dkernelm$ and $\tkernelm$ (see Property~\ref{shiftedKronprop} in the appendix), but the complexities can be scaled down even further by using sparse kernel matrix approximation~\citep{Mahoney2011,Gittens2013}.

However, these computational short-cuts only concern the case in which the training set is complete. If some of the instance-task pairs in the training set are missing or if there are several occurrences of certain pairs, one has to resort, for example, to gradient-descent-based training approaches~\citep{park2009pairwise,pahikkala2013conditional,Kashima2009,Airola2017genvectric}. While the training can be accelerated via tensor algebraic optimization, such techniques still remain considerably slower than the approach based on eigenvalue decomposition.

\subsection{Two-step kernel ridge regression}
\begin{figure}[t]
   \begin{center}
   \includegraphics[scale=0.9]{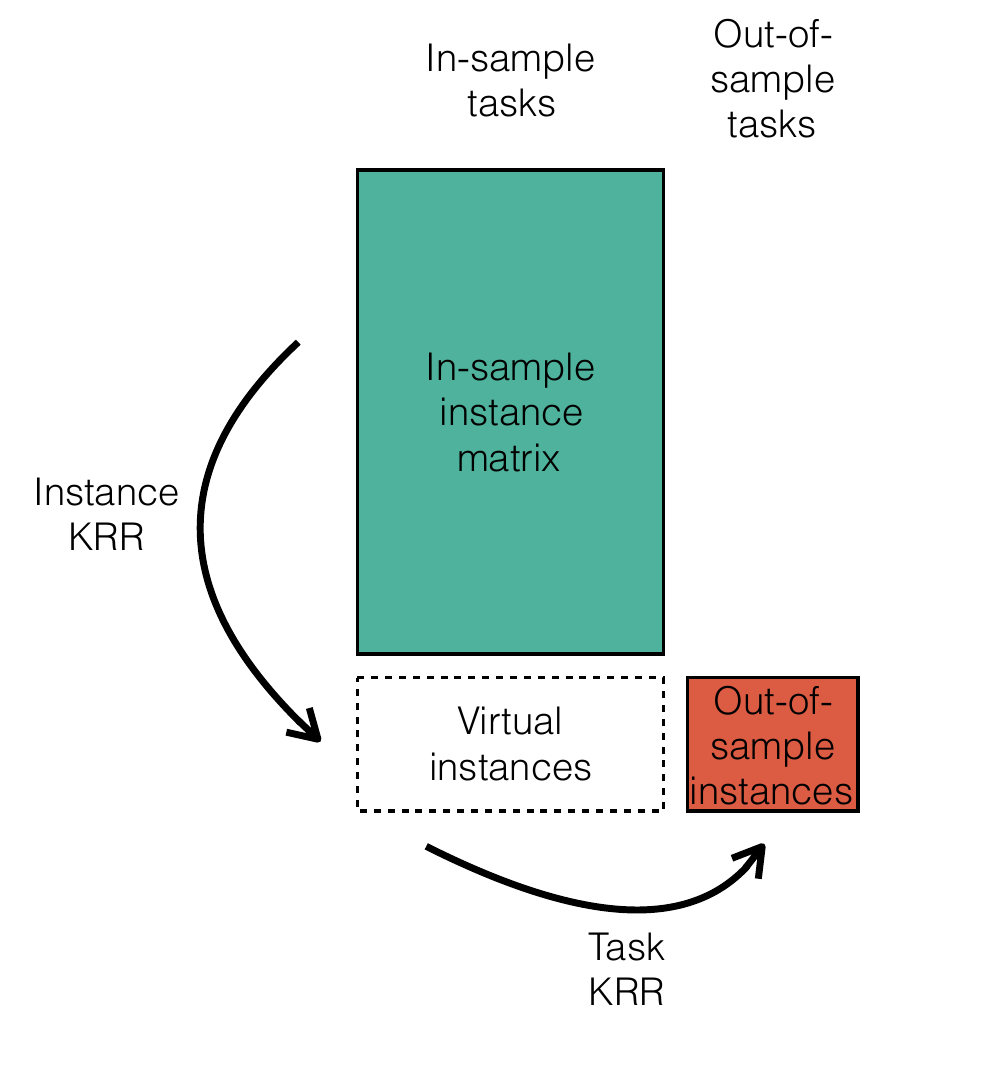} % requires the graphicx package
   \end{center}
   \caption{Principle of two-step \gls{krr}. In a first step, a virtual prediction is made for the out-of-sample tasks for new instances using a first \gls{krr} model. A second \gls{krr} model is trained using these data and this model is used to make predictions for new tasks.}
   \label{concept2SRLS}
\end{figure}

Clearly, independent-task ridge regression can generalize to new instances, but not to new tasks as no dependence between these tasks is encoded in the model. Kronecker \gls{krr}, on the other hand, can be used for all four prediction settings depicted in Figure~\ref{fig:settings}. But since our definition of `instances' and `tasks' is purely conventional, nothing is preventing us from building a model using the kernel function $\gkernelf(\cdot,\cdot)$ to generalize to new tasks for the same instances. By combining two ordinary \gls{krr}s, one for generalizing to new instances and one that generalizes to new tasks, one can indirectly predict for new dyads.

More formally, suppose one wants to make a prediction for the dyad $(\objectu,\objectv)$. Let $\bm{k}\in\mathbb{R}^{\osize}$ denote the vector of instance kernel evaluations between the instances in the training set and an instance in the test set, i.e.\ $\bm{k}(\objectu)=\left(\kernelf(\objectu,\objectu_1),\ldots,\kernelf(\objectu,\objectu_\osize)\right)\transpose$. Likewise, $\bm{g}\in\mathbb{R}^{\qsize}$ represents the vector of task kernel evaluations between the target task and the auxiliary tasks, i.e.
$\bm{g}(\objectv)=\left(\gkernelf(\objectv,\objectv_1),\ldots,\gkernelf(\objectv,\objectv_\qsize)\right)\transpose$. Based on the parameters found by solving Eq.~(\ref{systemKRR}), we can make a prediction for the new instance $\objectu$ for all the auxiliary tasks:
\eqn{\label{secondKRR}
\mathbf{\predfun}_\vset (\objectu) = \bm{k}\transpose \left(\dkernelm+\regparam_\objectu \idmatrix\right)^{-1} \bm{Y}  \,,
}
with $\regparam_\objectu$ the specific regularization parameter for the instances. This vector of predictions $\mathbf{\predfun}_\vset (\objectu)$ can be used as a set of labels in an intermediate step to train a second model for generalizing to new tasks for the same instance. Thus, using the task kernel and a regularization parameter for the tasks $\regparam_\objectv$, one obtains:
\eq{
\predfun^\mathrm{TS}(\objectu, \objectv) =  \bm{g}\transpose \left(\tkernelm+\regparam_\objectv \idmatrix\right)^{-1}  \mathbf{\predfun}_\vset (\objectu)\transpose \,,
}
or, by making use of Eq.~(\ref{secondKRR}), the prediction is given by
\eqn{\label{2SRLS}
\predfun^\mathrm{TS}(\objectu, \objectv) &= \bm{k}\transpose \left(\dkernelm+\regparam_\objectu \idmatrix\right)^{-1} \bm{Y}  \left(\tkernelm+\regparam_\objectv \idmatrix\right)^{-1} \bm{g} \\
&= \bm{k}\transpose \bm{A}^\mathrm{TS} \bm{g}\,,
}
with $\bm{A}^\mathrm{TS}$ the dual parameters. The concept of two-step \gls{krr} is illustrated in Figure~\ref{concept2SRLS}. Two-step \gls{krr} can be used for any of the prediction settings discussed in Section~\ref{sec:pwpredset}. Note that in practice there is no need to explicitly calculate $\mathbf{\predfun}_\vset$, nor does it matter if in the first step one uses a model for new tasks and in the second step for instances, or the other way around.

This model can be cast in a similar form as the pairwise prediction function of Eq.~(\ref{pairwisedual}) by making use of Property~\ref{pr:vectkron} in the appendix. Thus, for two-step \gls{krr} the parameters are given by
\eqn{
\bm{A}^\mathrm{TS} &=\left(\dkernelm+\regparam_\objectu \idmatrix\right)^{-1} \bm{Y}  \left(\tkernelm+\regparam_\objectv \idmatrix\right)^{-1}\,. \label{partwostepmatrix}
%\boldsymbol{\alpha} &= \ve (A)\,.
}
The time complexity for two-step \gls{krr} is the same as for Kronecker \gls{krr}: $\mathcal{O}(\osize^3+\qsize^3)$. The parameters can also be found by computing the eigenvalue decomposition of the two Gram matrices. Starting from these eigenvalue decompositions, it is possible to directly obtain the dual parameters for any values of the regularization hyperparameters $\lambda_\objectu$ and $\lambda_\objectv$. Because of its conceptual simplicity, it is quite straightforward to use two-step \gls{krr} for certain situations when the label matrix is not complete. The algebraic simplicity of two-step \gls{krr} can lead to some interesting algorithmic shortcuts for training and validating models. We refer to our other work for a theoretical and experimental overview~\citep{Stock2017phd}.

\subsection{Linear filter for matrices}\label{KMfeaturelesskernels}

Single-task \gls{krr} uses a feature description only for the objects $\objectu$, while Kronecker and two-step \gls{krr} incorporate feature descriptions of both objects $\objectu$ and $\objectv$. Is it possible to make predictions without any features at all? Obviously, this would only be possible for Setting~A, where both objects are known during training. The structure of the label matrix $\labelmatrix$, e.g.~being low rank, often contains enough information to successfully make predictions for this setting. In recommender systems, methods that do not take side features into account are often categorized as collaborative filtering methods~\citep{Su2009}. 

In order to use our framework, we have to construct some feature description, in the form of a kernel function. An object $\objectu$, resp.\ $\objectv$, can be described by the observed labels of the dyads that contain the object. In the context of item recommendation, this seems reasonable: users are described by the ratings they have given to items and, likewise, items are described by users' ratings. For example, Basilico and Hofman use a kernel based on the Pearson correlation of rating vectors of users to obtain a kernel description of users for collaborative filtering~\citep{Basilico2004a}. In bioinformatics, van Laarhoven and colleagues predict drug-target interactions using so-called Gaussian interaction profile kernels, i.e.\ the classical radial basis kernel applied to the corresponding row or column of the label matrix~\citep{VanLaarhoven2011a}. There is nothing inherently wrong with using the labels to construct feature descriptions or kernels for the object. One should only be cautious when taking a holdout set for model selection or model evaluation; the omitted labels should also be removed from the feature description to prevent overfitting.

Kernels that take observed labels into account, such as the Gaussian interaction profile kernel, are in theory quite powerful. As they can be used to learn nonlinear associations, they lead to more expressive models than matrix factorization. The advantage of using these kernels compared to other collaborative filtering techniques such as matrix factorization, $k$-nearest neighbors or restricted Boltzmann machines, is that side features can elegantly be incorporated into the model. To this end, one only has to combine the collaborative and content-based kernel matrices, for example, by computing a weighted sum or element-wise multiplication. 

Recently, a different method was proposed to make predictions without object features~\citep{Stock2016a}. This method makes a prediction for a couple $(\objectu_i,\objectv_j)$ by aggregating the observed value, the row- and column average and the total average of the label matrix. By analogy with an image filter, this method was called a linear filter (LF) for matrices. The prediction matrix (Eq.~(\ref{eq:pairwisehatmatrix})) is obtained as the following weighted average of averages:
\eqn{
\predmatrix_{ij}^\text{LF} = \alpha_1\labelmatrix_{ij} + \alpha_2\frac{1}{n}\sum_{k=1}^n\labelmatrix_{kj}+\alpha_3\frac{1}{m}\sum_{l=1}^m\labelmatrix_{il}+\alpha_4\frac{1}{nm}\sum_{k=1}^n\sum^m_{l=1}\labelmatrix_{kl} \label{eq:matrixfilter}\,,
}
where $(\alpha_1, \alpha_2, \alpha_3, \alpha_4)\in [0, 1]^4$. The first term is proportional to the label, while the last term is proportional to the average of all labels. The second (resp.\ third) term is proportional to the average label in the corresponding column (resp.\ row). The parameters $\alpha_1, \alpha_2, \alpha_3$ and $\alpha_4$ act as weighing coefficients. 

As mentioned in the introduction, this linear filter can outcompete standard methods such as matrix factorization and it was observed to be particularly tolerant to a large number of false negatives in the label matrices. An attractive property of the linear filter is that it is possible to derive a computational short-cut for \gls{loo} cross validation: 
\eqn{
\predmatrix_{ij}^{\text{LOO}} = \frac{\predmatrix_{ij} - \left(\alpha_1 + \frac{\alpha_2}{n} + \frac{\alpha_3}{m} + \frac{\alpha_4}{nm}\right)\labelmatrix_{ij}}{1-\left(\alpha_1 + \frac{\alpha_2}{n} + \frac{\alpha_3}{m} + \frac{\alpha_4}{nm}\right)}\,. \label{eq:filterlooshortcut}
}
This allows one to efficiently compute the prediction value $\predmatrix_{ij}^{\text{LOO}}$ using the label matrix \emph{except} for the value $\labelmatrix_{ij}$.% This holdout can be used to efficiently set the optimal values of $\alpha_1, \alpha_2, \alpha_3$ and $\alpha_4$ by means of leave-one-pair-out cross validation.

In Section~\ref{smoothertofilter} we will show that this linear filter is a special instance of Kronecker \gls{krr}. This filter can hence be written in the form of Eq.~(\ref{eq:pairwisepredictionfunction}) with the parameters obtained by solving a system of the form (\ref{eq:pairwiseparameter}). In practice, however, one would always prefer to work directly using Eq.~(\ref{eq:matrixfilter}). The parameters $\alpha_1, \alpha_2, \alpha_3$ and $\alpha_4$ can be set by means of leave-one-pair-out cross-validation using Eq.~(\ref{eq:filterlooshortcut}).

\section{Theoretical considerations}\label{theoreticalconsiderations}

In Subsections~\ref{equivalencetheorems} and~\ref{smoothertofilter} we show how the four methods of Section~2 are related via special kernels and algebraic equivalences. We establish several links that are specific for Setting~A, B, C or D. Therefore, each result is formulated as a theorem that indicates the setting to which it applies in its header. In Subsection~\ref{KMuniversality} the universality of the Kronecker product pairwise kernels is proven. This result provides a theoretical justification for the observation that Kronecker-based systems often obtain a very satisfactory performance in empirical studies. The universality is also used to prove the consistency of the methods that we analyze. This is done in Subsection~\ref{spectralInterpretation} via a spectral interpretation. In addition, this interpretation also allows us to illustrate that two-step kernel ridge regression adopts a special decomposable filter. 

%will show that two-step \gls{krr} can be seen as using \gls{krr} with special kinds of pairwise kernel matrices, depending on the prediction setting as depicted in Figure~\ref{fig:settings}. We will show that Setting~A is a transductive setting, while Setting~D is merely a special case of (Kronecker) \gls{krr}. Furthermore, the linear filter for matrices can be obtained by plugging special kinds of kernels in Kronecker \gls{krr}. We will also study the different learning algorithms from a spectral filtering point of view, showing that two-step \gls{krr} uses a special decomposable filter. From these observations we will prove the universality and admissibility of the methods. \ednote{context?}

\subsection{Equivalence between two-step and other kernel ridge regression methods}\label{equivalencetheorems}

The relation between two-step kernel ridge regression and independent-task ridge regression is given in the following theorem.

\begin{theorem}[Setting~B]\label{ITKRRTSeqiuivalence}
When the Gram matrix of the tasks $\bm{\tkernelm}$ is full rank and $\lambda_\objectv$ is set to zero, independent-task \gls{krr} and two-step \gls{krr} return the same predictions for any given training task:
\eq{
\predfun^\mathrm{IT}_j(\cdot)\equiv\predfun^\mathrm{TS}(\cdot, \objectv_j)\,.
}
\end{theorem}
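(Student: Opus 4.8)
The plan is to write both prediction functions in closed form, using Eqs.~(\ref{systemKRR}) and~(\ref{2SRLS}), and then show that they coincide once the target task is one of the training tasks $\objectv_j$. First I would record the independent-task predictor. Solving the linear system~(\ref{systemKRR}) gives $\bm{A}^\mathrm{IT}=(\dkernelm+\regparam_\objectu\idmatrix)^{-1}\labelmatrix$, so that for an arbitrary instance $\objectu$ the prediction for task $\objectv_j$ is the inner product of the instance-kernel vector with the $j$-th column of $\bm{A}^\mathrm{IT}$, namely $\predfun^\mathrm{IT}_j(\objectu)=\bm{k}(\objectu)\transpose(\dkernelm+\regparam_\objectu\idmatrix)^{-1}\labelmatrix_{.j}$.

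Next I would specialize the two-step predictor~(\ref{2SRLS}) to $\regparam_\objectv=0$. Because $\tkernelm$ is full rank by hypothesis it is invertible, so $\regparam_\objectv=0$ is a legitimate choice and the prediction becomes $\predfun^\mathrm{TS}(\objectu,\objectv)=\bm{k}(\objectu)\transpose(\dkernelm+\regparam_\objectu\idmatrix)^{-1}\labelmatrix\,\tkernelm^{-1}\bm{g}(\objectv)$. The key observation, and the crux of the whole argument, is that for an in-sample task the task-kernel evaluation vector equals the corresponding column of the task Gram matrix: $\bm{g}(\objectv_j)=(\gkernelf(\objectv_j,\objectv_1),\ldots,\gkernelf(\objectv_j,\objectv_\qsize))\transpose=\tkernelm_{.j}$, which holds precisely because the dataset is complete and hence the auxiliary tasks are exactly $\vset$.

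Combining these, I would compute $\tkernelm^{-1}\bm{g}(\objectv_j)=\tkernelm^{-1}\tkernelm_{.j}=\bm{e}_j$, the $j$-th standard basis vector, since multiplying the inverse of $\tkernelm$ by its own $j$-th column reproduces the $j$-th column of the identity. Substituting back, the factor $\labelmatrix\,\tkernelm^{-1}\bm{g}(\objectv_j)=\labelmatrix\bm{e}_j=\labelmatrix_{.j}$ collapses the two-step expression to $\bm{k}(\objectu)\transpose(\dkernelm+\regparam_\objectu\idmatrix)^{-1}\labelmatrix_{.j}$, which is exactly $\predfun^\mathrm{IT}_j(\objectu)$. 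As $\objectu$ is arbitrary, the two prediction functions are identical, establishing $\predfun^\mathrm{IT}_j(\cdot)\equiv\predfun^\mathrm{TS}(\cdot,\objectv_j)$.

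The argument reduces to a single linear-algebra identity once the pieces are assembled, so there is no genuine obstacle. The only points requiring care are that $\bm{g}(\objectv_j)$ literally coincides with a column of $\tkernelm$ (ensured by completeness of the training set) and that the full-rank assumption is exactly what makes the second regularization step with $\regparam_\objectv=0$ well defined as an interpolation. Both are supplied by the hypotheses, so no further machinery is needed.
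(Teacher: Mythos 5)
Your proposal is correct and follows essentially the same route as the paper's proof: both write the two-step predictor with $\regparam_\objectv=0$, use that $\bm{g}(\objectv_j)$ is the $j$-th column of $\tkernelm$ so that $\tkernelm^{-1}\bm{g}(\objectv_j)=\bm{e}_j$, and collapse the expression to the independent-task predictor. You merely make explicit the step the paper abbreviates as $\bm{Y}\tkernelm^{-1}\tkernelm$.
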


\begin{proof}
The prediction for the independent-task \gls{krr} is given by:
\eq{
\predfun^\mathrm{IT}_j(\objectu) = [\bm{k}\transpose(\dkernelm+\regparam_\objectu\idmatrix)^{-1} \bm{Y}]_j \, .
}
For two-step \gls{krr}, it follows from Eq.~(\ref{2SRLS}) that
\eq{
\predfun^\mathrm{TS}_j(\objectu) &= [\bm{k}\transpose(\dkernelm+\regparam_\objectu\idmatrix)^{-1} \bm{Y}\tkernelm^{-1}\tkernelm]_j \\
 &=  [\bm{k}\transpose(\dkernelm+\regparam_\objectu\idmatrix)^{-1} \bm{Y}]_j\,.}
 
\end{proof}
When $\tkernelm$ is singular, the $\qsize$ outputs for the different tasks are projected on a lower-dimensional subspace by two-step \gls{krr}. This means that a dependence between the tasks is enforced, even when $\lambda_\objectv=0$.

The connection between two-step and Kronecker \gls{krr} is established by the following results.
\begin{theorem}[Setting~A]\label{KKTSequivalenceA}
Consider the following pairwise kernel matrix:
\eq{
\bm{\Xi} &= \bm{G}\otimes\bm{K}\left(\lambda_\objectu\lambda_\objectv\idmatrix\otimes\idmatrix+\lambda_\objectv\idmatrix\otimes\bm{K}+\lambda_\objectu\bm{G}\otimes\idmatrix\right)^{-1}\,.}
The predictions for the training data $\bm{F}$ using pairwise \gls{krr} (Eq.\ (\ref{PAIRWISEREEST})) with the above pairwise kernel and regularization parameter $\lambda=1$ correspond to those obtained with two-step \gls{krr} using the kernel matrices $\bm{K}$, $\bm{G}$ with respective regularization parameters $\lambda_\objectu$ and $\lambda_\objectv$.
\end{theorem}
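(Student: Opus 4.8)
The plan is to show that the two methods produce identical hat matrices, since by Eq.~(\ref{eq:pairwisehatmatrix}) equal hat matrices give equal training predictions $\bm{F}$. First I would write down the two-step \gls{krr} hat matrix. Replacing $\bm{k}$ and $\bm{g}$ in Eq.~(\ref{2SRLS}) by columns of $\dkernelm$ and $\tkernelm$ and using Eq.~(\ref{partwostepmatrix}), the fitted values on the training dyads are $\bm{F}^\mathrm{TS}=\dkernelm\bm{A}^\mathrm{TS}\tkernelm=\dkernelm(\dkernelm+\regparam_\objectu\idmatrix)^{-1}\bm{Y}(\tkernelm+\regparam_\objectv\idmatrix)^{-1}\tkernelm$. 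Vectorizing via Property~\ref{pr:vectkron} and exploiting the symmetry of the Gram matrices yields
\eq{
\ve(\bm{F}^\mathrm{TS}) = \left[\tkernelm(\tkernelm+\regparam_\objectv\idmatrix)^{-1}\otimes\dkernelm(\dkernelm+\regparam_\objectu\idmatrix)^{-1}\right]\ve(\bm{Y})\,,
}
so the target hat matrix is $\bm{H}^\mathrm{TS}=\tkernelm(\tkernelm+\regparam_\objectv\idmatrix)^{-1}\otimes\dkernelm(\dkernelm+\regparam_\objectu\idmatrix)^{-1}$.

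Next I would compute the pairwise \gls{krr} hat matrix for the kernel $\bm{\Xi}$ with $\regparam=1$, which by Eq.~(\ref{hatk}) is $\bm{\Xi}(\bm{\Xi}+\idmatrix)^{-1}$. Introducing the shorthand $\bm{\Xi}=\bm{P}\bm{D}^{-1}$ with $\bm{P}=\bm{G}\otimes\bm{K}$ and $\bm{D}=\regparam_\objectu\regparam_\objectv\idmatrix\otimes\idmatrix+\regparam_\objectv\idmatrix\otimes\bm{K}+\regparam_\objectu\bm{G}\otimes\idmatrix$, the first key simplification is the factorization $\bm{\Xi}+\idmatrix=(\bm{P}+\bm{D})\bm{D}^{-1}$, which holds by direct expansion and requires no commutativity. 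Consequently $\bm{\Xi}(\bm{\Xi}+\idmatrix)^{-1}=\bm{P}\bm{D}^{-1}\bm{D}(\bm{P}+\bm{D})^{-1}=\bm{P}(\bm{P}+\bm{D})^{-1}$, so the unwieldy inverse buried in the definition of $\bm{\Xi}$ cancels and only $\bm{P}+\bm{D}$ survives.

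The crux is then to recognize that $\bm{P}+\bm{D}$ collapses into a single Kronecker product. Using bilinearity of the Kronecker product I would verify $(\bm{G}+\regparam_\objectv\idmatrix)\otimes(\bm{K}+\regparam_\objectu\idmatrix)=\bm{G}\otimes\bm{K}+\regparam_\objectu\bm{G}\otimes\idmatrix+\regparam_\objectv\idmatrix\otimes\bm{K}+\regparam_\objectu\regparam_\objectv\idmatrix\otimes\idmatrix=\bm{P}+\bm{D}$. Inverting this product and applying the mixed-product rule $(\bm{A}\otimes\bm{B})(\bm{C}\otimes\bm{D})=(\bm{A}\bm{C})\otimes(\bm{B}\bm{D})$ gives
\eq{
\bm{\Xi}(\bm{\Xi}+\idmatrix)^{-1} = (\bm{G}\otimes\bm{K})\left[(\bm{G}+\regparam_\objectv\idmatrix)^{-1}\otimes(\bm{K}+\regparam_\objectu\idmatrix)^{-1}\right] = \tkernelm(\tkernelm+\regparam_\objectv\idmatrix)^{-1}\otimes\dkernelm(\dkernelm+\regparam_\objectu\idmatrix)^{-1}\,,
}
which is exactly $\bm{H}^\mathrm{TS}$, completing the argument.

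For well-definedness I would note that $\bm{D}$ is invertible: its eigenvalues are $\regparam_\objectu\regparam_\objectv+\regparam_\objectv\mu+\regparam_\objectu\sigma$ ranging over eigenvalues $\mu\geq0$ of $\bm{K}$ and $\sigma\geq0$ of $\bm{G}$, hence bounded below by $\regparam_\objectu\regparam_\objectv>0$. The only genuine obstacle is spotting the two factorizations, namely $\bm{\Xi}+\idmatrix=(\bm{P}+\bm{D})\bm{D}^{-1}$ and $\bm{P}+\bm{D}=(\bm{G}+\regparam_\objectv\idmatrix)\otimes(\bm{K}+\regparam_\objectu\idmatrix)$; once these are in hand, everything reduces to routine Kronecker algebra.
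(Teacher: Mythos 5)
Your proof is correct, but it takes a genuinely different route from the paper. The paper argues variationally: it writes Kronecker KRR in value-regularization form, $J(\bm{F})=\ve(\bm{F}-\bm{Y})\transpose\ve(\bm{F}-\bm{Y})+\ve(\bm{F})\transpose\bm{\Xi}^{-1}\ve(\bm{F})$, expands $\bm{\Xi}^{-1}$ into the trace expression $\trace(\lambda_\objectu\lambda_\objectv\bm{F}\transpose\bm{K}^{-1}\bm{F}\bm{G}^{-1}+\lambda_\objectu\bm{F}\transpose\bm{K}^{-1}\bm{F}+\lambda_\objectv\bm{F}\transpose\bm{F}\bm{G}^{-1})$, differentiates with respect to $\bm{F}$, and solves the stationarity condition to recover $\bm{F}=\bm{K}\bm{A}^\mathrm{TS}\bm{G}$. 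You instead equate the two hat matrices directly, and the two factorizations you isolate --- $\bm{\Xi}+\idmatrix=(\bm{P}+\bm{D})\bm{D}^{-1}$ and $\bm{P}+\bm{D}=(\bm{G}+\regparam_\objectv\idmatrix)\otimes(\bm{K}+\regparam_\objectu\idmatrix)$ --- are exactly the right levers; both check out, and the mixed-product step is routine from there. Each approach buys something. Yours is more elementary and slightly more general: it never inverts $\bm{K}$ or $\bm{G}$, so it covers singular positive semidefinite Gram matrices, whereas the paper must explicitly assume invertibility (and notes this caveat after the proof). The paper's detour through the variational formulation is not wasted effort, though: the explicit empirical-risk-minimization form of two-step KRR (Eq.~(\ref{eq:TSERM})) is reused immediately afterwards to justify the transductive modification in Eq.~(\ref{eq:transdusctiveERMS}) and the treatment of incomplete data, which your hat-matrix identity does not by itself provide.
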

\begin{proof}
We will formulate the corresponding empirical risk minimization of Eq.\ (\ref{tikhonov}) from the perspective of value regularization. Since Setting~A is an imputation setting, we directly search for the optimal predicted label matrix $\bm{F}$, rather than the optimal parameter matrix. Starting from the objective function for Kronecker \gls{krr}, the predictions for the training data are obtained through minimizing the following variational function:
\eqn{
J(\bm{F})&=\ve(\bm{F}-\bm{Y})\transpose\ve(\bm{F}-\bm{Y})+\ve(\bm{F})\transpose\bm{\Xi}^{-1}\ve(\bm{F}) \label{eq:TSERM}\\
&=\ve(\bm{F}-\bm{Y})\transpose\ve(\bm{F}-\bm{Y}) \nonumber \\
&\qquad+\ve(\bm{F})\transpose\left(\bm{G}\otimes\bm{K}\left(\lambda_\objectu\lambda_\objectv\idmatrix\otimes\idmatrix+\lambda_\objectv\idmatrix\otimes\bm{K}+\lambda_\objectu\bm{G}\otimes\idmatrix\right)^{-1}\right)^{-1}\ve(\bm{F})\nonumber \\
&=\ve(\bm{F}-\bm{Y})\transpose\ve(\bm{F}-\bm{Y}) \nonumber \\
&\qquad+\ve(\bm{F})\transpose\left(\bm{G}^{-1}\otimes\bm{K}^{-1}\left(\lambda_\objectu\lambda_\objectv\idmatrix\otimes\idmatrix+\lambda_\objectv\idmatrix\otimes\bm{K}+\lambda_\objectu\bm{G}\otimes\idmatrix\right)\right)\ve(\bm{F})\nonumber \\
&=\ve(\bm{F}-\bm{Y})\transpose\ve(\bm{F}-\bm{Y}) \nonumber \\
&\qquad+\ve(\bm{F})\transpose\left(\lambda_\objectu\lambda_\objectv\bm{G}^{-1}\otimes\bm{K}^{-1}+\lambda_\objectu\idmatrix\otimes\bm{K}^{-1}+\lambda_\objectv\bm{G}^{-1}\otimes\idmatrix\right)\ve(\bm{F})\nonumber \\ %\label{TRANSDUCTIVESUBS}
&=\trace((\bm{F}-\bm{Y})\transpose(\bm{F}-\bm{Y})
+\lambda_\objectu\lambda_\objectv\bm{F}\transpose\bm{K}^{-1}\bm{F}\bm{G}^{-1} \nonumber \\
&\qquad+\lambda_\objectu\bm{F}\transpose\bm{K}^{-1}\bm{F}+\lambda_\objectv\bm{F}\transpose\bm{F}\bm{G}^{-1})\,.\nonumber
}
%In Eq.~(\ref{TRANSDUCTIVESUBS}) we have rewritten the kernel as
%\eq{
%\bm{\Xi} &= \bm{G}\otimes\bm{K}\left(\lambda_\objectu\lambda_\objectv\idmatrix\otimes\idmatrix+\lambda_\objectv\idmatrix\otimes\bm{K}+\lambda_\objectu\bm{G}\otimes\idmatrix\right)^{-1}\\
%&= \left(\lambda_\objectu\lambda_\objectv\bm{G}^{-1}\otimes\bm{K}^{-1}+\lambda_\objectu\idmatrix\otimes\bm{K}^{-1}+\lambda_\objectv\bm{G}^{-1}\otimes\idmatrix\right)^{-1}\;.
%}
The derivative with respect to $\bm{F}$ is given by:
\eq{
\frac{\partial J(\bm{F})}{\partial \bm{F}}&=2(\bm{F}-\bm{Y}+\lambda_\objectu\lambda_\objectv\bm{K}^{-1}\bm{F}\bm{G}^{-1}+\lambda_\objectu\bm{K}^{-1}\bm{F}+\lambda_\objectv\bm{F}\bm{G}^{-1})\\
&=2(\lambda_\objectu\bm{K}^{-1}+\idmatrix)\bm{F} + 2(\lambda_\objectu\bm{K}^{-1}+\idmatrix)(\lambda_v\bm{F}\bm{G}^{-1})-2\bm{Y}\\
&=2(\lambda_\objectu\bm{K}^{-1}+\idmatrix)\bm{F}(\lambda_\objectv\bm{G}^{-1}+\idmatrix)-2\bm{Y}\,.
}
Setting it to zero and solving with respect to $\bm{F}$ yields:
\eq{
\bm{F}&=(\lambda_\objectu\bm{K}^{-1}+\idmatrix)^{-1}\bm{Y}(\lambda_\objectv\bm{G}^{-1}+\idmatrix)^{-1}\\
&=\bm{K}(\bm{K}+\lambda_\objectu\idmatrix)^{-1}\bm{Y}(\bm{G}+\lambda_\objectv\idmatrix)^{-1}\bm{G}\,.
}
Comparing with Eq.~(\ref{partwostepmatrix}), we note that $\bm{F}=\bm{K}\bm{A}^\mathrm{TS}\bm{G}$, which proves the theorem.
\end{proof}

Here, we have assumed that $\dkernelm$ and $\tkernelm$ are invertible. Note that the kernel $\bm{\Xi}$ can always be obtained as long as $\dkernelm$ and $\tkernelm$ are positive semi-definite. The relevance of the above theorem is that it formulates two-step \gls{krr} as an empirical risk minimization problem for Setting~A (Eq.~(\ref{eq:TSERM})). It is important to note that the pairwise kernel matrix $\bm{\Xi}$ only appears in the regularization term of this variational problem. The loss function is only dependent on the predicted values $\bm{F}$ and the label matrix $\bm{Y}$. Using two-step \gls{krr} for Setting~A when dealing with incomplete data is thus well defined. The empirical risk minimization problem of Eq.~(\ref{eq:TSERM}) can be modified so that the squared loss only takes the observed dyads into account:  % relatieve verwijzing naar de goede vergelijking? In de gaten te houden
\eqn{\label{eq:transdusctiveERMS}
J'(\bm{F})=\sum_{(\objectu, \objectv, y)\in S}(y - \predfun(\objectu, \objectv)))^2+\ve(\bm{F})\transpose\bm{\Xi}^{-1}\ve(\bm{F})\,,
}
with $S$ the training set of labeled dyads. In this case, one ends up with a transductive setting. This explains why Setting~A is the most easy setting to predict for, as in transductive learning one only has to predict for a finite number of dyads known during training, in contrast to inductive learning where the model has to make predictions for any new dyad, a harder problem~\citep{Chapelle2006}. See \citet{Rifkin2007,Johnson2008} for a more in-depth discussion.

Two-step and Kronecker \gls{krr} also coincide in an interesting way for Setting~D (e.g.\ the special case in which there is no labeled data available for the target task). This, in turn, will allow us to show the consistency of two-step \gls{krr} via its universal approximation and spectral regularization properties. The theorem below shows the relation between two-step \gls{krr} and ordinary Kronecker \gls{krr} for Setting~D.
\begin{theorem}[Setting~D] \label{KKTSequivalence}
Consider a setting with a complete training set. Let $\predfun^\mathrm{TS}(\cdot, \cdot)$ be a model trained with two-step \gls{krr} and $\predfun^\mathrm{OKKLS}(\cdot, \cdot)$ be a model trained with ordinary Kronecker kernel least-squares regression (OKKLS) using the following pairwise kernel function on $\Uspace\times\Vspace$:
\eqn{\label{twostepkernel}
\Upsilon\left(\left(\objectu,\objectv),(\alternative{\objectu},\alternative{\objectv}\right)\right)%\\
=\left(\kernelf\left(\objectu,\alternative{\objectu}\right)
+\regparam_\objectu\delta\left(\objectu,\alternative{\objectu}\right)\right)
\left(\gkernelf\left(\objectv,\alternative{\objectv})
+\regparam_\objectv\delta\left(\objectv,\alternative{\objectv}\right)\right)\right)
}
where $\delta$ is the delta kernel whose value is 1 if the arguments are equal and 0 otherwise.
Then for making predictions for instances $\objectu\in\Uspace\setminus\uset$ and tasks $\objectv\in\Vspace\setminus\vset$ not seen in the training set, it holds that $\predfun^\mathrm{TS}(\objectu,\objectv)=\predfun^\mathrm{OKKLS}(\objectu, \objectv)$.
\end{theorem}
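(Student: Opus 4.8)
The plan is to prove the identity in two moves: first show that the OKKLS dual parameter matrix coincides exactly with the two-step dual parameters $\bm{A}^\mathrm{TS}$ of Eq.~(\ref{partwostepmatrix}), and then show that at a Setting~D dyad the OKKLS prediction function collapses to the bilinear form $\bm{k}\transpose\bm{A}^\mathrm{TS}\bm{g}$ that defines $\predfun^\mathrm{TS}$ in Eq.~(\ref{2SRLS}).

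First I would assemble the pairwise Gram matrix of $\Upsilon$ over the complete training set. Because the training instances $\objectu_1,\ldots,\objectu_\osize$ are distinct and likewise the training tasks $\objectv_1,\ldots,\objectv_\qsize$ are distinct, the delta kernels evaluated on training objects reduce to ordinary Kronecker deltas, so the instance factor $\kernelf(\objectu_i,\objectu_k)+\regparam_\objectu\delta(\objectu_i,\objectu_k)$ is precisely entry $(i,k)$ of $\dkernelm+\regparam_\objectu\idmatrix$ and the task factor is the corresponding entry of $\tkernelm+\regparam_\objectv\idmatrix$. Since $\Upsilon$ is a product of an instance term and a task term, the Gram matrix factorizes as the Kronecker product $\kkernelm=(\tkernelm+\regparam_\objectv\idmatrix)\otimes(\dkernelm+\regparam_\objectu\idmatrix)$. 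As OKKLS is unregularized Kronecker \gls{krr}, its parameters solve $\kkernelm\ve(\bm{A}^\mathrm{OKKLS})=\ve(\labelmatrix)$; applying the vec--Kronecker identity (Property~\ref{pr:vectkron}) turns this into $(\dkernelm+\regparam_\objectu\idmatrix)\bm{A}^\mathrm{OKKLS}(\tkernelm+\regparam_\objectv\idmatrix)=\labelmatrix$, whose solution $(\dkernelm+\regparam_\objectu\idmatrix)^{-1}\labelmatrix(\tkernelm+\regparam_\objectv\idmatrix)^{-1}$ is exactly $\bm{A}^\mathrm{TS}$.

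The remaining step is to evaluate the prediction at a dyad $(\objectu,\objectv)$ with $\objectu\in\Uspace\setminus\uset$ and $\objectv\in\Vspace\setminus\vset$. Here every delta term $\delta(\objectu,\objectu_i)$ and $\delta(\objectv,\objectv_j)$ vanishes, so $\Upsilon((\objectu,\objectv),(\objectu_i,\objectv_j))=\kernelf(\objectu,\objectu_i)\gkernelf(\objectv,\objectv_j)$ and the OKKLS prediction reduces to $\sum_{i,j}a_{ij}^\mathrm{OKKLS}\kernelf(\objectu,\objectu_i)\gkernelf(\objectv,\objectv_j)=\bm{k}\transpose\bm{A}^\mathrm{OKKLS}\bm{g}$; substituting $\bm{A}^\mathrm{OKKLS}=\bm{A}^\mathrm{TS}$ gives $\predfun^\mathrm{TS}(\objectu,\objectv)$ by Eq.~(\ref{2SRLS}). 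The conceptual crux --- more a point to get right than a genuine obstacle --- is the dual role played by the delta kernel: in the training Gram matrix it injects exactly the ridge shifts $\regparam_\objectu\idmatrix$ and $\regparam_\objectv\idmatrix$, so that \emph{unregularized} OKKLS reproduces the \emph{regularized} two-step solution, whereas at prediction time in Setting~D the deltas disappear because the test objects are genuinely new. This is precisely why the equivalence is special to Setting~D; were either object in-sample, a surviving delta term would break it. I would also note that the inverses used are well defined whenever $\dkernelm$ and $\tkernelm$ are positive semidefinite and $\regparam_\objectu,\regparam_\objectv>0$.
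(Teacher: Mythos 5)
Your proposal is correct and follows essentially the same route as the paper's own proof: both identify the training Gram matrix of $\Upsilon$ as $(\tkernelm+\regparam_\objectv\idmatrix)\otimes(\dkernelm+\regparam_\objectu\idmatrix)$, solve the unregularized system via the vec--Kronecker identity to get $\bm{A}^\mathrm{OKKLS}=(\dkernelm+\regparam_\objectu\idmatrix)^{-1}\labelmatrix(\tkernelm+\regparam_\objectv\idmatrix)^{-1}=\bm{A}^\mathrm{TS}$, and drop the delta terms at out-of-sample dyads. The only difference is the order of the two steps, which is immaterial.
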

\begin{proof}
From Eq.~(\ref{2SRLS}) we have the following dual model for prediction:
\eq{
\predfun^\mathrm{TS}(\objectu, \objectv)=\sum_{i=1}^\osize \sum_{j=1}^\qsize a_{ij}^\mathrm{TS} \kernelf(\objectu,\objectu_i) \gkernelf(\objectv,\objectv_j)\,,
}
with $\bm{A}^\mathrm{TS}=[a_{ij}^\mathrm{TS}]$ the matrix of parameters. Similarly, the dual representation of the OKKLS (see Eq.~(\ref{pairwisedual})), using a parametrization $\bm{A}^\mathrm{OKKLS}=[a_{ij}^\mathrm{OKKLS}]$, is given by
\eq{
\predfun^\mathrm{OKKLS}(\objectu, \objectv)&=\sum_{i=1}^\osize \sum_{j=1}^\qsize a_{ij}^\mathrm{OKKLS} \Upsilon\left(\left(\objectu,\objectv),(\objectu_i,\objectv_j\right)\right) \\
&=\sum_{i=1}^\osize \sum_{j=1}^\qsize a_{ij}^\mathrm{OKKLS}(\kernelf(\objectu,{\objectu}_i)
+\regparam_\objectu\delta(\objectu,{\objectu}_i))
(\gkernelf(\objectv,{\objectv}_j)\\
&\qquad\qquad +\regparam_\objectv\delta(\objectv,{\objectv}_j)))\\
&=\sum_{i=1}^\osize \sum_{j=1}^\qsize a_{ij}^\mathrm{OKKLS}\kernelf(\objectu,\objectu_i) \gkernelf(\objectv,\objectv_j)\,.\\
}
In the last step we used the fact that $\objectu\neq\objectu_i$ and $\objectv\neq\objectv_j$ to drop the delta kernels. Hence, we need to show that $\bm{A}^\mathrm{TS}=\bm{A}^\mathrm{OKKLS}$.

By Eq.~(\ref{partwostepmatrix}) and denoting $\widetilde{\bm{G}}=\left(\bm{G}+\regparam\idmatrix\right)^{-1}$ and $\widetilde{\bm{K}}=\left(\bm{K}+\regparam\idmatrix\right)^{-1}$, we observe that the model parameters $\bm{A}^\mathrm{TS}$ of the two-step model can also be obtained in the following closed form:
\eqn{
\bm{A}^\mathrm{TS}&=\widetilde{\bm{K}}
\bm{Y}\widetilde{\bm{G}}
\,.
}
The kernel matrix of $\Upsilon$ for Setting~D can be expressed as:
$
\bm{\Upsilon} = \left(\bm{G}+\regparam_\objectv\idmatrix\right)\otimes\left(\bm{K}+\regparam_\objectu\idmatrix\right)\,.
$
The OKKLS problem with kernel $\Upsilon$ being
\begin{multline*}
\ve(\bm{A}^\mathrm{OKKLS})=\\
\qquad\argmin{\bm{A}\in\mathbb{R}^{\osize\times\qsize}}\left(\ve(\bm{Y})-\bm{\Upsilon}\ve(\bm{A})\right)\transpose\left(\ve(\bm{Y})-\bm{\Upsilon}\ve(\bm{A})\right)\,,
\end{multline*}
its minimizer can be expressed as
\eqn{\label{OKKLSparameters}
\ve(\bm{A}^\mathrm{OKKLS})&=\bm{\Upsilon}^{-1}\ve(\bm{Y})%\\
=\left(\left(\bm{G}+\regparam_\objectv\idmatrix\right)^{-1}\otimes\left(\bm{K}+\regparam_\objectu\idmatrix\right)^{-1}\right)\ve(\bm{Y})\nonumber\\
&=\ve\left(\left(\bm{K}+\regparam_\objectu\idmatrix\right)^{-1}\bm{Y}\left(\bm{G}+\regparam_\objectv\idmatrix\right)^{-1}\right)%\\
=\ve\left(\widetilde{\bm{K}}\bm{Y}\widetilde{\bm{G}}\right)\,.
}
Here we again make use of Property~\ref{pr:vectkron} in the appendix. From Eq.~(\ref{OKKLSparameters}) it then follows that $\bm{A}^\mathrm{TS}=\bm{A}^\mathrm{OKKLS}$, which proves the theorem.
\end{proof}

\subsection{Smoother kernels lead to the linear filter}\label{smoothertofilter}
Here, we will show that using Kronecker \gls{krr} in tandem with certain kernels results in the linear filter of Section~\ref{KMfeaturelesskernels}. When no good features of the objects are available, we propose to use different kernels, `agnostic' of the true objects:
\begin{align*}
\kernelf^\text{SM}\text(\objectu,\bar{\objectu}) &= 1 + \theta_\objectu \delta(\objectu,\bar{\objectu})\\
\gkernelf^\text{SM}(\objectv,\bar{\objectv}) &=  1 + \theta_\objectv \delta(\objectv,\bar{\objectu})\,,
\end{align*}
or, equivalently, as Gram matrices:
\begin{equation}
\dkernelm = \ones_{\osize\times\osize}+ \theta_\objectu \idmatrix_\osize \qquad \text{ and }\qquad\tkernelm = \ones_{\qsize\times\qsize}+ \theta_\objectv \idmatrix_\qsize\,.\label{eq:smootherkernels}
\end{equation}
Here, $\theta_\objectu$ and $\theta_\objectv$ are two hyperparameters of the kernels, $\ones_{\osize\times\osize}$ is an ${\osize\times\osize}$ matrix filled with ones and $\ones_{\qsize\times\qsize}$ is an ${\qsize\times\qsize}$ matrix filled with ones. We will call these kernels smoother kernels for reasons that will become clear. The rationale behind these kernels is quite simple: a kernel that is the identity matrix would imply that all objects are unique and independent; there is no similarity between them. Using the all-ones matrix on the other hand encodes all objects being exactly the same; no distinction between two objects can be made. Hence, the kernels of (\ref{eq:smootherkernels}) represent a trade-off between all objects being similar (first part) and all objects being unique (second part). This is controlled by the hyperparameters $\theta_\objectu$ and $\theta_\objectv$.

Using these kernels in the Kronecker-based models has an interesting interpretation: the predictions can be written as a weighted sum of averages.

\begin{theorem}[Smoother kernels]
Predictions using Kronecker \gls{krr} for Setting~A using the Gram matrices (\ref{eq:smootherkernels}) are of the form:
\eq{
\predfun(\objectu_i, \objectv_j) = \alpha_1\labelmatrix_{ij} + \alpha_2 \frac{1}{\qsize}\sum_{l=1}^\qsize\labelmatrix_{il} + \alpha_3 \frac{1}{\osize}\sum_{k=1}^\osize\labelmatrix_{kj} + \alpha_4 \frac{1}{\osize\qsize} \sum^\osize_{k=1}\sum^\qsize_{l=1}\labelmatrix_{kl}\,,
} 
with $(\alpha_1, \alpha_2, \alpha_3, \alpha_4)\in\realset^4$.
\end{theorem}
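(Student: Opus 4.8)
The plan is to compute the hat matrix $\bm{H}^\kkernelf = \kkernelm(\kkernelm+\regparam\idmatrix)^{-1}$ of Eq.~(\ref{PAIRWISEREEST}) explicitly for the smoother Gram matrices and to read off the four weights. The starting observation is that the all-ones matrix is proportional to a rank-one orthogonal projector: writing $\bm{P}_\objectu = \frac{1}{\osize}\ones_{\osize\times\osize}$ and $\bm{P}_\objectu^\perp = \idmatrix_\osize - \bm{P}_\objectu$ (and analogously $\bm{P}_\objectv,\bm{P}_\objectv^\perp$ for the tasks), these are complementary projectors onto the span of the constant vector and its orthogonal complement. Since $\ones_{\osize\times\osize} = \osize\,\bm{P}_\objectu$, the smoother kernel decomposes spectrally as $\dkernelm = (\osize+\theta_\objectu)\bm{P}_\objectu + \theta_\objectu\bm{P}_\objectu^\perp$, and likewise $\tkernelm = (\qsize+\theta_\objectv)\bm{P}_\objectv + \theta_\objectv\bm{P}_\objectv^\perp$.

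Next I would expand the Kronecker product $\kkernelm = \tkernelm\otimes\dkernelm$ over these projectors. Distributivity of the Kronecker product yields four mutually orthogonal projectors $\bm{P}_\objectv\otimes\bm{P}_\objectu$, $\bm{P}_\objectv\otimes\bm{P}_\objectu^\perp$, $\bm{P}_\objectv^\perp\otimes\bm{P}_\objectu$ and $\bm{P}_\objectv^\perp\otimes\bm{P}_\objectu^\perp$ that resolve the identity, with associated eigenvalues $\mu_1=(\qsize+\theta_\objectv)(\osize+\theta_\objectu)$, $\mu_2=(\qsize+\theta_\objectv)\theta_\objectu$, $\mu_3=\theta_\objectv(\osize+\theta_\objectu)$ and $\mu_4=\theta_\objectv\theta_\objectu$, respectively. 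Because the hat matrix is a function of $\kkernelm$ acting on each eigenspace by the scalar $\mu/(\mu+\regparam)$, it inherits the same projector decomposition: $\bm{H}^\kkernelf = c_1\,\bm{P}_\objectv\otimes\bm{P}_\objectu + c_2\,\bm{P}_\objectv\otimes\bm{P}_\objectu^\perp + c_3\,\bm{P}_\objectv^\perp\otimes\bm{P}_\objectu + c_4\,\bm{P}_\objectv^\perp\otimes\bm{P}_\objectu^\perp$, with $c_k = \mu_k/(\mu_k+\regparam)$.

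The final step is to undo the vectorization. Using Property~\ref{pr:vectkron}, each term $(\bm{P}_\objectv\otimes\bm{P}_\objectu)\ve(\labelmatrix) = \ve(\bm{P}_\objectu\labelmatrix\bm{P}_\objectv)$ translates into a matrix product; since left multiplication by $\bm{P}_\objectu$ replaces every entry of a column by its instance average while right multiplication by $\bm{P}_\objectv$ replaces every entry of a row by its task average, the four operators produce, at entry $(i,j)$, the total average, the row average $\frac{1}{\qsize}\sum_l\labelmatrix_{il}$, the column average $\frac{1}{\osize}\sum_k\labelmatrix_{kj}$, and the residual $\labelmatrix_{ij}-(\text{row avg})-(\text{col avg})+(\text{total avg})$. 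Collecting the coefficients of these four quantities gives $\alpha_1 = c_4$, $\alpha_2 = c_2 - c_4$, $\alpha_3 = c_3 - c_4$ and $\alpha_4 = c_1 - c_2 - c_3 + c_4$, which are real numbers, establishing the claimed form.

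I anticipate the main obstacle to be the bookkeeping in the Kronecker expansion: one must correctly pair each of the four eigenvalues with its projector and then track, through the vectorization identity, which projector yields the entry itself, the row average, the column average and the total average, since a single sign or index slip propagates into the wrong $\alpha_k$. Everything else reduces to routine linear algebra once the rank-one structure of $\ones$ is exploited.
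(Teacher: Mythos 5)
Your proof is correct, and it takes a genuinely different route from the paper's. The paper argues structurally: it notes that the pairwise Gram matrix lies in the four-dimensional algebra spanned by $\idmatrix_\qsize\otimes\idmatrix_\osize$, $\ones_{\qsize\times\qsize}\otimes\idmatrix_\osize$, $\idmatrix_\qsize\otimes\ones_{\osize\times\osize}$ and $\ones_{\qsize\times\qsize}\otimes\ones_{\osize\times\osize}$, invokes the appendix closure properties (Properties~\ref{pr:smoothmult} and~\ref{pr:smoothinv}: products and inverses of such matrices stay in the algebra) to conclude that the hat matrix $\kkernelm(\kkernelm+\regparam\idmatrix)^{-1}$ has the same four-term form, and then un-vectorizes each term via Property~\ref{pr:vectkron} exactly as you do in your last step. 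You instead diagonalize that algebra: the rank-one projectors $\bm{P}_\objectu,\bm{P}_\objectu^\perp,\bm{P}_\objectv,\bm{P}_\objectv^\perp$ give four mutually orthogonal idempotents resolving the identity, on which the hat matrix acts by the scalars $\mu_k/(\mu_k+\regparam)$. Your approach buys explicit closed-form filter factors $c_k$ (and hence explicit $\alpha$'s in terms of $\theta_\objectu,\theta_\objectv,\regparam,\osize,\qsize$), connects naturally to the spectral-filtering view of Section~\ref{spectralInterpretation}, and avoids the messy coefficient bookkeeping hidden in the appendix properties; the paper's approach avoids identifying eigenspaces but only establishes the qualitative form. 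One phrasing quibble: the operators $\bm{P}_\objectv\otimes\bm{P}_\objectu^\perp$ and $\bm{P}_\objectv^\perp\otimes\bm{P}_\objectu$ produce the \emph{centered} row and column averages (row average minus total average, column average minus total average), not the raw averages as your sentence suggests --- but your final coefficient collection $\alpha_1=c_4$, $\alpha_2=c_2-c_4$, $\alpha_3=c_3-c_4$, $\alpha_4=c_1-c_2-c_3+c_4$ is exactly the one consistent with the correct centered quantities, so the argument goes through as written.
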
 
\begin{proof}
For Setting~A, the hat matrix of Eq.~(\ref{eq:pairwisehatmatrix}) transforms the label matrix in the prediction matrix. The hat matrix $\hatmatrix^\text{SM}$ for Kronecker \gls{krr} using these smoother kernels can be obtained by applying Eq.~(\ref{hatk}):
\eqn{\label{eq:smootherform}
\hatmatrix^\text{SM} = a_1 \idmatrix_\qsize\otimes\idmatrix_\osize + a_2 \ones_\qsize\otimes\idmatrix_\osize + a_3 \idmatrix_\qsize\otimes\ones_\osize + a_4 \ones_\qsize\otimes\ones_\osize\,.
}
This can easily be seen because the pairwise Gram matrix will be of the form (\ref{eq:smootherform}) and Properties~\ref{pr:smoothmult} and \ref{pr:smoothinv} state that multiplying or inverting matrices of the form (\ref{eq:smootherform}) results in a matrix of the same form. From these properties, it follows that the hat matrix will also be of this form. 

The prediction for dyad $(\objectu_i, \objectv_j)$ is given by $[\hatmatrix^\text{SM} \ve(\labelmatrix)]_{j\qsize+i}$. Using the relation between the Kronecker product and the vectorization operation, each term of (\ref{eq:smootherform}) can be rewritten using Property~\ref{pr:vectkron} as follows
\begin{align*}
a_1 [(\idmatrix_\qsize\otimes\idmatrix_\osize) \ve(\labelmatrix)]_{j\osize+i} =a_1 [\idmatrix_\osize \labelmatrix\idmatrix_\qsize]_{ij}&=a_1\labelmatrix_{ij}\\
a_2 [(\ones_{\qsize\times\qsize}\otimes\idmatrix_\osize) \ve(\labelmatrix)]_{j\osize+i} =a_2 [\idmatrix_\osize \labelmatrix\ones_{\qsize\times\qsize}]_{ij} &= a_2\sum_{l=1}^\qsize\labelmatrix_{il}\\
a_3 [(\idmatrix_\qsize\otimes\ones_{\osize\times\osize}) \ve(\labelmatrix)]_{j\osize+i} =a_3 [\ones_{\osize\times\osize} \labelmatrix\idmatrix_\qsize]_{ij}&=a_3\sum_{k=1}^\osize\labelmatrix_{kj}\\
a_4 [(\ones_{\qsize\times\qsize}\otimes\ones_{\osize\times\osize}) \ve(\labelmatrix)]_{j\osize+i} =a_4 [\ones_{\osize\times\osize} \labelmatrix\ones_{\qsize\times\qsize}]_{ij}&=a_4\sum_{l=1}^\qsize\sum_{k=1}^\osize\labelmatrix_{kl}\,,
\end{align*}
which proves the theorem.

\end{proof}

The smoother kernel is thus quite restrictive in the type of models that can be learned. It can only exploit the fact that some rows or columns have a larger average value (e.g.~in item recommendation, some items in collaborative filtering have a high average rating, independent for the user). Nevertheless, it can lead to good baseline predictions for Setting~A and is particularly useful for small datasets with no side-features, such as species interaction networks.
%\end{newtext}
\subsection{Universality of the Kronecker product pairwise kernel}\label{KMuniversality}
Here we consider the universal approximation properties of Kronecker \gls{krr} and, by Theorems~\ref{KKTSequivalenceA} and \ref{KKTSequivalence}, of two-step \gls{krr}. This is a necessary step in showing the consistency of the latter method. First, recall the concept of universal kernel functions.
\begin{definition} \label{kerneluniversalitydef}
{\bf~\citep{Steinwart2002consistency}} A continuous kernel $\kernelf(\cdot,\cdot)$ on a compact metric space $\anyspace$ (i.e.\ $\anyspace$ is closed and bounded) is called universal if the reproducing kernel Hilbert space (RKHS) induced by $\kernelf(\cdot,\cdot)$ is dense in $C(\anyspace)$, where $C(\anyspace)$ is the space of all continuous functions $\predfun : \anyspace \rightarrow \mathbb{R}$.
\end{definition}
The universality property indicates that the hypothesis space induced by a universal kernel can approximate any continuous function on the input space $\anyspace$ to be learned arbitrarily well, given that the available set of training data is large and representative enough, and the learning algorithm can efficiently find this approximation from the hypothesis space~\citep{Steinwart2002consistency}. In other words, the learning algorithm is consistent in the sense that, informally put, the hypothesis learned by it gets closer to the function to be learned while the size of the training set gets larger. The consistency properties of two-step \gls{krr} are considered in more detail in Subsection~\ref{spectralInterpretation}.

\newcommand{\tempfirstfun}{a}
\newcommand{\tempsecondfun}{b}
\newcommand{\temppairfun}{t}

Next, we consider the universality of the Kronecker product pairwise kernel. The following result is a straightforward modification of some of the existing results in the literature (e.g.~\citet{Waegeman2012}), but it is presented here for self-containedness. This theorem is mainly related to Setting~D, while it also covers the other settings as special cases.
\begin{theorem}\label{th:universality}
The kernel $\kkernelf^\mathrm{KK}((\cdot,\cdot),(\cdot,\cdot))$ on $\Uspace\times\Vspace$ defined in Eq.\ (\ref{pairwisekernel}) is universal if the instance kernel $\kernelf(\cdot,\cdot)$ on $\Uspace$ and the task kernel $\gkernelf(\cdot,\cdot)$ on $\Vspace$ are both universal.
\end{theorem}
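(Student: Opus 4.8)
The plan is to reduce the statement to the Stone--Weierstrass theorem applied on the compact metric space $\Uspace\times\Vspace$. First I would record the topological prerequisite: since $\Uspace$ and $\Vspace$ are compact metric spaces, their product $\Uspace\times\Vspace$ (with, say, the sum metric) is again a compact metric space, and $\kkernelf^\mathrm{KK}$ is continuous as a product of the continuous kernels $\kernelf$ and $\gkernelf$. Thus Definition~\ref{kerneluniversalitydef} applies to $\kkernelf^\mathrm{KK}$, and the goal becomes to show that its RKHS $\hypspace$ is dense in $C(\Uspace\times\Vspace)$.

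Second I would invoke the tensor-product structure of RKHSs: for the product kernel $\kernelf(\objectu,\alternative{\objectu})\gkernelf(\objectv,\alternative{\objectv})$ the induced RKHS is the Hilbert tensor product $\hypspace_\kernelf\otimes\hypspace_\gkernelf$ of the two component spaces. Concretely, every elementary product $(\objectu,\objectv)\mapsto f(\objectu)h(\objectv)$ with $f\in\hypspace_\kernelf$ and $h\in\hypspace_\gkernelf$ lies in $\hypspace$ (with $\|f\,h\|_{\hypspace}\le\|f\|_{\hypspace_\kernelf}\|h\|_{\hypspace_\gkernelf}$), so $\hypspace$ contains the linear span $\mathcal{S}$ of all such products. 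It therefore suffices to prove that $\mathcal{S}$ is dense in $C(\Uspace\times\Vspace)$ in the supremum norm.

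Third, I would pass from $\mathcal{S}$ to the larger span $\mathcal{T}$ of products $\tilde f(\objectu)\tilde h(\objectv)$ with $\tilde f\in C(\Uspace)$ and $\tilde h\in C(\Vspace)$. Because $\kernelf$ and $\gkernelf$ are universal, $\hypspace_\kernelf$ and $\hypspace_\gkernelf$ are dense in $C(\Uspace)$ and $C(\Vspace)$ respectively; since all functions are bounded on the compact factors, uniform approximation of $\tilde f$ by elements of $\hypspace_\kernelf$ and of $\tilde h$ by elements of $\hypspace_\gkernelf$ yields uniform approximation of $\tilde f\,\tilde h$ by elements of $\mathcal{S}$, so $\overline{\mathcal{S}}=\overline{\mathcal{T}}$. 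Now $\mathcal{T}$ is a subalgebra of $C(\Uspace\times\Vspace)$, since a product of two elementary tensors is again elementary, $(\tilde f_1\tilde h_1)(\tilde f_2\tilde h_2)=(\tilde f_1\tilde f_2)(\tilde h_1\tilde h_2)$; it contains the constants (take $\tilde f\equiv 1$, $\tilde h\equiv 1$); and it separates points of $\Uspace\times\Vspace$, because for distinct dyads at least one coordinate differs, and a continuous function on that metric factor separates the two coordinate values while the other factor is set to the constant $1$. The Stone--Weierstrass theorem then gives $\overline{\mathcal{T}}=C(\Uspace\times\Vspace)$, whence $\hypspace\supseteq\mathcal{S}$ is dense in $C(\Uspace\times\Vspace)$ and $\kkernelf^\mathrm{KK}$ is universal.

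The step I expect to need the most care is the second one: identifying $\hypspace$ with $\hypspace_\kernelf\otimes\hypspace_\gkernelf$ and justifying that elementary products genuinely belong to $\hypspace$ with the stated norm bound; once the product kernel's RKHS is known to contain $\mathcal{S}$, the remainder is a routine Stone--Weierstrass verification. An alternative that sidesteps the explicit tensor-product construction is to characterize universality through injectivity of the associated integral (mean-embedding) operator and argue factorwise, but the Stone--Weierstrass route above is the most transparent and self-contained.
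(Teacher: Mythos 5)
Your proposal is correct and follows essentially the same route as the paper's proof: identify the RKHS of the product kernel with the tensor product of the factor RKHSs, approximate elementary products of continuous functions uniformly by elementary products of RKHS functions using universality of the factors, and then apply Stone--Weierstrass to the algebra generated by such products on the compact space $\Uspace\times\Vspace$. Your explicit use of the \emph{linear span} of elementary tensors is in fact a small tightening of the paper's argument, which applies Stone--Weierstrass to the bare set of elementary products.
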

\begin{proof}
Let us define
\begin{equation}\label{funckron}
\begin{array}{l}
\mathcal{A}\otimes\mathcal{B}
%:\mathcal{V}\times\mathcal{V}\mapsto\mathbb{R}
=\left\{\temppairfun\mid t(\objectu,\objectv)=\tempfirstfun(\objectu)\tempsecondfun(\objectv),\tempfirstfun\in \mathcal{A},\tempsecondfun\in \mathcal{B}\right\}
\end{array}
\end{equation}
for compact metric spaces $\Uspace$ and $\Vspace$ and sets of functions $\mathcal{A}\subset C(\Uspace)$ and $\mathcal{B}\subset C(\Vspace)$. We observe that the RKHS of the kernel $\Gamma$ can be written as $\mathcal{H}(\kernelf)\otimes\mathcal{H}(\gkernelf)$, where $\mathcal{H}(\kernelf)$ and $\mathcal{H}(\gkernelf)$ are the RKHS of the kernels $\kernelf(\cdot,\cdot)$ and $\gkernelf(\cdot,\cdot)$, respectively.

Let $\epsilon>0$ and let $\temppairfun\in C(\Uspace)\otimes C(\Vspace)$ be an arbitrary function that, according to Eq.~(\ref{funckron}), can be written  as $\temppairfun(\objectu,\objectv)=\tempfirstfun(\objectu)\tempsecondfun(\objectv)$, where $\tempfirstfun\in C(\Uspace)$ and $\tempsecondfun\in C(\Vspace)$. By definition of the universality property, $\mathcal{H}(\kernelf)$ and $\mathcal{H}(\gkernelf)$ are dense in $C(\Uspace)$ and $C(\Vspace)$, respectively. Therefore, there exist functions $\alternative{\tempfirstfun}\in\mathcal{H}(\kernelf)$ and $\alternative{\tempsecondfun}\in\mathcal{H}(\gkernelf)$ such that
\[
\max_{\objectu\in\Uspace}\left\arrowvert \alternative{\tempfirstfun}(\objectu)-\tempfirstfun(\objectu)\right\arrowvert\leq \alternative{\epsilon},\qquad \max_{\objectv\in\Vspace}\left\arrowvert \alternative{\tempsecondfun}(\objectv)-\tempsecondfun(\objectv)\right\arrowvert\leq \alternative{\epsilon} \,,
\]
where $\alternative{\epsilon}$ is a constant for which it holds that
\[
\max_{\objectu\in\Uspace,\objectv\in\Vspace}\left\{\left\arrowvert \alternative{\epsilon} \, \tempfirstfun(\objectu)\right\arrowvert+\left\arrowvert\alternative{\epsilon} \,\tempsecondfun(\objectv)\right\arrowvert+\alternative{\epsilon}^2\right\}\leq \epsilon \,.
\]
Note that, according to the extreme value theorem, the maximum exists due to the compactness of $\Uspace$ and $\Vspace$ and the continuity of the functions $\tempfirstfun(\cdot)$ and $\tempsecondfun(\cdot)$. Now we have
\[
\begin{array}{l}
\displaystyle
\max_{\objectu\in\Uspace,\objectv\in\Vspace}\left\arrowvert t(\objectu,\objectv)-\alternative{\tempfirstfun}(\objectu)\alternative{\tempsecondfun}(\objectv)\right\arrowvert\\
\displaystyle
\leq\max_{\objectu\in\Uspace,\objectv\in\Vspace}\left\arrowvert t(\objectu,\objectv)-\tempfirstfun(\objectu)\tempsecondfun(\objectv)\right\arrowvert+\left\arrowvert \alternative{\epsilon} \,\tempfirstfun(\objectu)\right\arrowvert+\left\arrowvert\alternative{\epsilon}\,\tempsecondfun(\objectv)\right\arrowvert+\alternative{\epsilon}^2\\
\displaystyle
=\max_{\objectu\in\Uspace,\objectv\in\Vspace}\left\arrowvert \alternative{\epsilon} \,\tempfirstfun(\objectu)\right\arrowvert+\left\arrowvert\alternative{\epsilon} \,\tempsecondfun(\objectv)\right\arrowvert+\alternative{\epsilon}^2\leq \epsilon,
\end{array}
\]
which confirms the density of $\mathcal{H}(\kernelf)\otimes\mathcal{H}(\gkernelf)$ in $C(\Uspace)\otimes C(\Vspace)$.

The space $\Uspace\times\Vspace$ is compact if both $\Uspace$ and $\Vspace$ are compact according to Tikhonov's theorem. It is straightforward to see that $C(\Uspace)\otimes C(\Vspace)$ is a subalgebra of $C(\Uspace\times\Vspace)$, it separates points in $\Uspace\times\Vspace$, it vanishes at no point of $C(\Uspace\times\Vspace)$, and it is therefore dense in $C(\Uspace\times\Vspace)$ due to the Stone-Weierstra{\ss} theorem. Thus, $\mathcal{H}(\kernelf)\otimes\mathcal{H}(\gkernelf)$ is also dense in $C(\Uspace\times\Vspace)$, and $\Gamma$ is a universal kernel on $\Uspace\times\Vspace$.
\end{proof}
%Here, we show that two-step KRR with complete training set for Setting~D can also be considered as a kernel-based learning algorithm that uses the pairwise Kronecker product kernel together with a specific spectral filtering regularizer. Therefore, the above shown universal approximation properties of this kernel are also connected to the consistency properties of two-step KRR method, as is elaborated below in more detail.

\subsection{Spectral interpretation and consistency}\label{spectralInterpretation}
%In Section~\ref{KKTSequivalence} we have shown the relation between two-step, and Kronecker \gls{krr} for Setting~D. 

In this subsection we will study the difference between independent-task, Kronecker and two-step \gls{krr} from the point of view of spectral regularization. The above shown universal approximation properties of this kernel are also connected to the consistency properties of two-step \gls{krr}, as is elaborated in more detail in this subsection.

Learning by spectral regularization originates from the theory of ill-posed problems. This paradigm is well studied in domains such as image analysis~\citep{Bertero1998} and, more recently, in machine learning -- e.g.~\citet{LoGerfo2008}. Here, one wants to find the parameters $\bm{\boldsymbol\alpha}$ of the data-generating process given a set of noisy measurements $\labelvec$ such that
\eqn{\label{illposedproblem}
\kkernelm\bm{\boldsymbol\alpha} \approx \labelvec\,,
}
with $\kkernelm$ a Gram matrix with eigenvalue decomposition $\kkernelm = \bm{W}\bm{\Lambda}\bm{W}\transpose$. At first glance, one can find the parameters $\bm{\boldsymbol\alpha}$ by inverting $\kkernelm$:
\eq{
\bm{\boldsymbol\alpha} & = \kkernelm^{-1}\labelvec \\
 &=  \bm{W}\bm{\Lambda}^{-1}\bm{W}\transpose\labelvec\,.
}
If $\kkernelm$ has small eigenvalues, the inverse becomes highly unstable: small changes in the feature description of the label vector will lead to huge changes in $\bm{\boldsymbol\alpha}$. Spectral regularization deals with this problem by generalizing the inverse by a so-called filter function to make solving Eq.~(\ref{illposedproblem}) well-posed. The following definition of a spectral filter-based regularizer is standard in the machine learning literature (see e.g.~\citet{LoGerfo2008} and references therein).
Note that we assume $\kkernelf((\cdot,\cdot),(\cdot,\cdot))$ being bounded with $\kappa>0$ such that $\sup_{\bm{x}\in\ispace}\sqrt{\kkernelf(\bm{x},\bm{x})}\leq\kappa$, ensuring that the eigenvalues of the Gram matrix $\kkernelm$ are in $[0,\kappa^2]$.
% To further analyze the above filter functions, we follow \citep{Bauer2007regularization,LoGerfo2008,Baldassarre2012} and say that
\begin{definition}[Admissible regularizer]\label{filterdef}
A function $\filterfun_\regparam:[0,\kappa^2]\rightarrow \mathbb{R}$, parameterized by $0<\regparam\leq\kappa^2$, is an admissible regularizer if there exist constants $D, B,\gamma\in\mathbb{R}$ and $\alternative{\nu},\gamma_\nu > 0$ such that
\[
\sup_{0<\sigma\leq\kappa^2}\arrowvert\sigma\filterfun_\regparam(\sigma)\arrowvert\leq D\textnormal{, }
\sup_{0<\sigma\leq\kappa^2}\arrowvert\filterfun_\regparam(\sigma)\arrowvert\leq\frac{B}{\regparam}\textnormal{, }
\sup_{0<\sigma\leq\kappa^2}\arrowvert 1-\sigma\filterfun_\regparam(\sigma)\arrowvert\leq \gamma\,,
\]
\[
\text{and }\sup_{0<\sigma\leq\kappa^2} \frac{\regparam^\nu}{\sigma^\nu}\arrowvert 1-\sigma\filterfun_\regparam(\sigma)\arrowvert\leq \gamma_\nu,\, \quad\text{for any }\nu\in\,]0,\alternative{\nu}]\,,
\]
where the constant $\gamma_\nu$ does not depend on $\regparam$.
\end{definition}
The constant $\alternative{\nu}$ is in the literature called the qualification of the regularizer and it is related to the consistency properties of the learning method as described in more detail below.

% \begin{mydef}[spectral filter]
% A function $\filterfun_\regparam(\cdot) : \mathbb{R}^+ \rightarrow \mathbb{R}^+$, parameterized by one or more positive parameters $\lambda$ is a valid pairwise filter function if the following conditions hold:
% \begin{enumerate}
% \item when all parameters $\lambda\rightarrow 0$, then $\filterfun_\regparam(\sigma) \rightarrow \frac{1}{\sigma}$;
% \item if all parameters $\lambda \geq \lambda'$ then $\filterfun_\regparam(\sigma)\leq\filterfun_{\regparam'}(\sigma)$, for any eigenvalue $\sigma$.
% \end{enumerate}
% \end{mydef}
The spectral filter is a matrix function that acts as a stabilized generalization of a matrix inverse. Hence, Eq.~(\ref{illposedproblem}) can be solved by
\eq{
\bm{\boldsymbol\alpha}&=\filterfun_\regparam(\kkernelm)\labelvec\\
&=\bm{W}\filterfun_\regparam(\bm{\Lambda})\bm{W}\transpose\ve(\bm{Y})\,.
}
Similarly, the noisy measurements can be filtered to obtain a better estimation of the true labels:
\eq{
\bm{\predfun} &= \kkernelm\bm{\boldsymbol\alpha}\nonumber\\
&= \bm{W}\bm{\Lambda}\bm{W}\transpose\bm{W}\filterfun_\regparam(\bm{\Lambda})\bm{W}\transpose\ve(\bm{Y}) \nonumber \\
&= \bm{W}\bm{\Lambda}\filterfun_\regparam(\bm{\Lambda})\bm{W}\transpose\ve(\bm{Y})\,. \nonumber
}
The spectral interpretation allows for using a more general form of the hat matrix (Eq.~(\ref{hatk})):
\eq{\bm{H}^\kkernelf=\bm{W}\bm{\Lambda}\filterfun_\regparam(\bm{\Lambda})\bm{W}\transpose\,.
}
For example, the filter function corresponding to the Tikhonov regularization, as used for independent-task \gls{krr}, is given by
\eq{
\filterfun^\mathrm{TIK}_\regparam(\bm{\sigma})=\frac{1}{\bm{\sigma}+\regparam}\,,
}
with the ordinary least-squares approach corresponding to $\regparam=0$. Several other learning approaches, such as spectral cut-off, iterated Tikhonov and $L2$ Boosting, can also be expressed as filter functions, but cannot be expressed as a penalized empirical error minimization problem analogous to Eq.~(\ref{tikhonov})~\citep{LoGerfo2008}. The spectral interpretation can also be used to motivate novel learning algorithms.

Many authors have expanded this framework to multi-task settings -- e.g.~\citet{Baldassarre2012,Argyriou2007,Argyriou2010}. We translate the pairwise learning methods from Section~\ref{dyadicprediciton} to this spectral regularization context. Let us denote the eigenvalue decomposition of the instance and task kernel matrices as
\eq{
\bm{\dkernelm} = \bm{U} \bm{\Sigma} \bm{U}\transpose \qquad\text{and}\qquad \bm{\tkernelm} = \bm{V} \bm{S} \bm{V}\transpose\,.
}
Let $\bm{u}_i$ denote the $i$-th eigenvector of $\bm{\dkernelm}$ and $\bm{v}_j$ the $j$-th eigenvector of $\bm{\tkernelm}$. The eigenvalues of the kernel matrix obtained with the Kronecker product pairwise kernel on a complete training set can be expressed as the Kronecker product $\bm{\Lambda}=\bm{S}\otimes\bm{\Sigma}$ of the eigenvalues $\bm{\Sigma}$ and $\bm{S}$ of the instance and task kernel matrices. For the models in this paper, it is opportune to define a pairwise filter function over the representation of the instances and tasks.

% \begin{mydef}[pairwise spectral filter]
% A pairwise function $\vartheta_\regparam(\cdot, \cdot) : \mathbb{R}^+ \times \mathbb{R}^+\rightarrow \mathbb{R}^+$, parameterized by one or more positive parameters $\lambda$ is a valid pairwise filter function if the following two conditions hold:
% \begin{enumerate}
% \item when all parameters $\lambda\rightarrow 0$, then $\vartheta_\regparam(\sigma, s) \rightarrow \frac{1}{\sigma s}$;
% \item if all parameters $\lambda \geq \lambda'$ then $\vartheta_\regparam(\sigma, s)\leq\vartheta_{\regparam'}(\sigma, s)$, for any eigenvalue pair $(\sigma, s)$.
% \end{enumerate}
% \end{mydef}
\newcommand{\ebound}{a}
\newcommand{\rbound}{b}
Both of the factor kernels are assumed to be bounded, and hence we can write that all the eigenvalues $\varsigma$ of the Kronecker product kernel can be factorized as the product of the eigenvalues of the instance and task kernels as follows:
\eqn{\label{evalfactorization}
\varsigma=\sigma s\qquad\textnormal{with }0\leq\sigma, s\leq\ebound\sqrt{\varsigma}\textnormal{ and }1\leq\ebound<\infty\,,
}
where $\sigma,s$ denote the eigenvalues of the factor kernels and $\ebound$ the constant determined as the product of $\sup_{\objectu\in\Uspace}\sqrt{\kernelf(\objectu,\objectu)}$ and  $\sup_{\objectv\in\Vspace}\sqrt{\gkernelf(\objectv,\objectv)}$.
\begin{definition}[Pairwise spectral filter]
We say that a function $\filterfun_\regparam:[0,\kappa^2]\rightarrow \mathbb{R}$, parameterized by $0<\regparam\leq\kappa^2$, is a pairwise spectral filter if it can be written as
\[
%\filterfun_\regparam(\varsigma)=\filterfun_{\regparam_\objectv,\regparam_\objectu}(\sigma,s)
\filterfun_\regparam(\varsigma)=\vartheta_\regparam(\sigma,s)
\]
for some function $\vartheta_{\regparam}:[0,\ebound\sqrt{\varsigma}]^2\rightarrow \mathbb{R}$ with $1\leq\ebound<\infty$, and it is an admissible regularizer for all possible factorizations of the eigenvalues as in Eq.~(\ref{evalfactorization}).
\end{definition}

Since the eigenvalues of a Kronecker product of two matrices are just the scalar product of the eigenvalues of the matrices, the filter function for Kronecker \gls{krr} is given by
\eqn{\label{kronfilter}
\vartheta_\regparam^\mathrm{KK}(s, \sigma) = \filterfun_\lambda^\mathrm{TIK}(\sigma s)=\frac{1}{(\sigma s+\regparam)} \,,
}
where $\sigma$ and $s$ are the eigenvalues of $\bm{K}$ and $\bm{G}$, respectively. The admissibility of this filter is a well-known result, since it is simply the Tikhonov regularizer for the pairwise Kronecker product kernel.

Instead of considering two-step \gls{krr} from the kernel point of view, one can also cast it into the spectral filtering regularization framework. We start from Eq.~(\ref{partwostepmatrix}) in vectorized form:
\eq{
\ve(\bm{A}) &= \left((\bm{\tkernelm} + \lambda_\objectv\idmatrix)^{-1} \otimes (\bm{\dkernelm} + \lambda_\objectu\idmatrix)^{-1}\right) \ve(\bm{Y})\\
 &= \left(( \bm{V} \bm{S} \bm{V}\transpose+ \lambda_\objectv\idmatrix)^{-1} \otimes (\bm{U} \bm{\Sigma} \bm{U}\transpose+ \lambda_\objectu\idmatrix)^{-1}\right) \ve(\bm{Y})\\
& =\left(( \bm{V} \filterfun^\mathrm{TIK}_{\regparam_\objectv}(\bm{S}) \bm{V}\transpose) \otimes (\bm{U} \filterfun^\mathrm{TIK}_{\regparam_\objectu}(\bm{\Sigma})  \bm{U}\transpose)\right) \ve(\bm{Y})\\
& =\left((  \bm{V}\otimes \bm{U}) (\filterfun^\mathrm{TIK}_{\regparam_\objectv}(\bm{S})\otimes \filterfun^\mathrm{TIK}_{\regparam_\objectu}(\bm{\Sigma}) ) (  \bm{V}\otimes \bm{U})\transpose\right) \ve(\bm{Y})\,.
}
Hence, one can interpret two-step \gls{krr} with a complete training set for Setting~D as a spectral filtering regularization-based learning algorithm that uses the pairwise Kronecker product kernel with the following filter function:
\eqn{
\vartheta^\mathrm{TS}_{\regparam}(s,\sigma)&=\filterfun^\mathrm{TIK}_{\regparam_\objectv}(s)\filterfun^\mathrm{TIK}_{\regparam_\objectu}(\sigma)\nonumber\\
& =\frac{1}{(\sigma+\regparam_\objectu)(s+\regparam_\objectv)}\nonumber\\
&=\frac{1}{\sigma s+\regparam_\objectv\sigma+\regparam_\objectu s+\regparam_\objectv\regparam_\objectu}\,.\label{twostepfilter}
}
The validity of this filter is characterized by the following theorem.
\begin{theorem}
The filter function $\vartheta^\mathrm{TS}_{\regparam}(\cdot,\cdot)$ is admissible with $D=B=\gamma=1$, $\gamma_\nu=2\ebound\rbound$, and has qualification $\alternative{\nu}=\frac{1}{2}$ for all factorizations of $\varsigma$ and $\regparam$ as
\eqn{\label{factorizations}
\varsigma=\sigma s\textnormal{ and }\regparam=\regparam_\objectv\regparam_\objectu\quad\text{ with }0\leq\sigma, s\leq\ebound\sqrt{\varsigma}\textnormal{ and }0<\regparam_\objectv,\regparam_\objectu\leq\rbound\sqrt{\regparam}\,,
}
where $1\leq\ebound,\rbound<\infty$ are constants that do not depend on $\regparam$ or $\varsigma$.
\end{theorem}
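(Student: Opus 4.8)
The plan is to verify the four requirements of Definition~\ref{filterdef} directly, exploiting the fact that the two-step filter factors as a product of two Tikhonov filters. Writing $p_\objectu = \sigma/(\sigma+\regparam_\objectu)$ and $p_\objectv = s/(s+\regparam_\objectv)$, so that the residuals of the two single-task Tikhonov filters are $r_\objectu = 1-p_\objectu = \regparam_\objectu/(\sigma+\regparam_\objectu)$ and $r_\objectv = 1-p_\objectv = \regparam_\objectv/(s+\regparam_\objectv)$, the key observation is that $\varsigma\filterfun_\regparam(\varsigma) = \sigma s\,\vartheta^\mathrm{TS}_\regparam(s,\sigma) = p_\objectu p_\objectv$, whence the two-step residual decomposes as $1-\varsigma\filterfun_\regparam(\varsigma) = r_\objectu + r_\objectv - r_\objectu r_\objectv$. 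Since $0\leq r_\objectu,r_\objectv\leq 1$, every quantity in sight lies in $[0,1]$, which will immediately settle the first three bounds.

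First I would dispatch the three elementary constants. For $D=1$: $\varsigma\filterfun_\regparam(\varsigma)=p_\objectu p_\objectv$ is a product of two factors each in $[0,1]$, hence $\leq 1$. For $B=1$: because $\sigma,s\geq 0$ we have $(\sigma+\regparam_\objectu)(s+\regparam_\objectv)\geq\regparam_\objectu\regparam_\objectv=\regparam$, so $\filterfun_\regparam(\varsigma)\leq 1/\regparam$. For $\gamma=1$: the residual $1-\varsigma\filterfun_\regparam(\varsigma)=1-p_\objectu p_\objectv\in[0,1]$ since $p_\objectu p_\objectv\in[0,1]$.

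The substantive part is the qualification bound, which I expect to be the main obstacle. Assuming $\varsigma>0$ (the case $\varsigma=0$ is trivial), the starting point is the subadditive estimate $1-\varsigma\filterfun_\regparam(\varsigma)=r_\objectu+r_\objectv-r_\objectu r_\objectv\leq r_\objectu+r_\objectv$, reducing the task to bounding $\frac{\varsigma^\nu}{\regparam^\nu}r_\objectu$ (and symmetrically $\frac{\varsigma^\nu}{\regparam^\nu}r_\objectv$) by $\ebound\rbound$. To control the cross-terms I would first extract two-sided bounds from the factorization constraints (\ref{evalfactorization}) and (\ref{factorizations}): from $s\leq\ebound\sqrt{\varsigma}$ and $\varsigma=\sigma s$ one gets $\sigma\geq\sqrt{\varsigma}/\ebound$, and from $\regparam_\objectv\leq\rbound\sqrt{\regparam}$ and $\regparam=\regparam_\objectu\regparam_\objectv$ one gets $\regparam_\objectu\geq\sqrt{\regparam}/\rbound$. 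Then I would split into two regimes. When $\varsigma\geq\regparam$, use $r_\objectu\leq\regparam_\objectu/\sigma\leq\ebound\rbound\sqrt{\regparam/\varsigma}$, giving $\frac{\varsigma^\nu}{\regparam^\nu}r_\objectu\leq\ebound\rbound(\varsigma/\regparam)^{\nu-1/2}\leq\ebound\rbound$ precisely because $\nu-\tfrac12\leq 0$ and $\varsigma/\regparam\geq 1$. When $\varsigma<\regparam$, use the trivial $r_\objectu\leq 1$ to get $\frac{\varsigma^\nu}{\regparam^\nu}r_\objectu\leq(\varsigma/\regparam)^\nu\leq 1\leq\ebound\rbound$. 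In either regime the bound is $\ebound\rbound$, so summing the two residuals yields $\frac{\varsigma^\nu}{\regparam^\nu}\lvert 1-\varsigma\filterfun_\regparam(\varsigma)\rvert\leq 2\ebound\rbound=\gamma_\nu$.

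The reason the qualification saturates at $\alternative{\nu}=\tfrac12$ is exactly the exponent $\nu-\tfrac12$ appearing in the $\varsigma\geq\regparam$ regime: for $\nu>\tfrac12$ the factor $(\varsigma/\regparam)^{\nu-1/2}$ grows without bound as $\varsigma/\regparam\to\infty$, so no finite $\gamma_\nu$ exists. The main obstacle is therefore not any single inequality but organizing the estimate so that the coupling between the two factor spectra is handled correctly. A naive application of the single-filter qualification to $r_\objectu$ and $r_\objectv$ separately fails, because each leaves an uncontrolled cross-factor $s^\nu/\regparam_\objectv^\nu$ (resp.\ $\sigma^\nu/\regparam_\objectu^\nu$); it is only the two-sided bounds drawn from (\ref{evalfactorization})--(\ref{factorizations}), together with the $\varsigma$-versus-$\regparam$ regime split, that tame these and pin the qualification down at exactly one half.
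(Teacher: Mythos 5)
Your proof is correct, and it reaches the same constants ($\gamma_\nu=2\ebound\rbound$, qualification $\alternative{\nu}=\tfrac12$) by a genuinely different route for the key estimate. The paper substitutes the worst-case factorization $\regparam_\objectv\sigma+\regparam_\objectu s\leq 2\ebound\rbound\sqrt{\regparam}\sqrt{\varsigma}$ directly into the denominator of the residual, reduces the qualification condition to a single rational expression in $(\varsigma,\regparam,\nu)$, and then appeals to ``standard differential calculus'' to bound it by $2\ebound\rbound$; the remaining three conditions are dismissed as direct computation. You instead exploit the product structure $\varsigma\vartheta^\mathrm{TS}_\regparam(s,\sigma)=p_\objectu p_\objectv$ to write the residual as $r_\objectu+r_\objectv-r_\objectu r_\objectv\leq r_\objectu+r_\objectv$, bound each factor residual by $\ebound\rbound$ via the two-sided bounds $\sigma\geq\sqrt{\varsigma}/\ebound$, $\regparam_\objectu\leq\rbound\sqrt{\regparam}$ and an elementary split on $\varsigma\gtrless\regparam$, and sum. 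This buys two things: it replaces the unverified calculus step with inequalities one can check line by line, and it makes the provenance of the constant transparent --- one factor of $\ebound\rbound$ per factor kernel, hence the $2$. The paper's version is more compact and handles the whole expression at once. One small caveat applies equally to both arguments: showing that the \emph{upper bound} $(\varsigma/\regparam)^{\nu-1/2}$ (or the paper's numerator term $2\ebound\rbound\regparam^{\frac12-\nu}\varsigma^{\nu+\frac12}$) blows up for $\nu>\tfrac12$ does not by itself prove that the supremum is infinite; a matching lower bound along a sequence $\regparam\rightarrow 0$ would be needed to pin the qualification at exactly $\tfrac12$ rather than at least $\tfrac12$. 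Since the paper is no more rigorous on this point, this is not a gap relative to the published proof.
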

\begin{proof}
%It is straightforward to see that the admissibility of the function (\ref{twostepfilter}) is confirmed by $D, B,\gamma=1$ for all positive factorizations $\regparam=\regparam_\objectv\regparam_\objectu$ and $\varsigma=\sigma s$.
Let us recollect the last condition in Definition~\ref{filterdef}:
\[
\sup_{0<\varsigma\leq\kappa^2}\frac{\varsigma^\nu}{\regparam^\nu}\arrowvert 1-\varsigma\filterfun_\regparam(\varsigma)\arrowvert\leq \gamma_\nu,\quad\text{for any }\nu\in\,]0,\alternative{\nu}]\,,
\]
where $\gamma_\nu$ does not depend on $\regparam$. In order to show this for all cases covered by Eq.~(\ref{factorizations}), we rewrite the condition by taking the supremum with respect to the factorizations of $\varsigma$ and $\regparam$ given the constants $\ebound$ and~$\rbound$:
\eq{
\sup_{
\underset{\underset{ 0<\sigma, s\leq\ebound\sqrt{\varsigma}}{ 0<\regparam_\objectv,\regparam_\objectu\leq\rbound\sqrt{\regparam}}}{0<\varsigma\leq\kappa^2}}\frac{\varsigma^\nu}{\regparam^\nu}\left(1- \frac{\varsigma}{\varsigma+\regparam_\objectv\sigma+\regparam_\objectu s+\regparam}\right)\leq \gamma_\nu,\qquad\text{ for any }\nu\in\,]0,\alternative{\nu}]\,.
}
The left-hand side then becomes
\eq{
\sup_{0<\varsigma\leq\kappa^2}\frac{\varsigma^\nu}{\regparam^\nu}\left(1- \frac{\varsigma}{\varsigma+2\ebound\rbound\sqrt{\regparam}\sqrt{\varsigma}+\regparam}\right)
=\sup_{0<\varsigma\leq\kappa^2}\left(\frac{2\ebound\rbound\regparam^{\frac{1}{2}-\nu}\varsigma^{\nu+\frac{1}{2}}+\regparam^{1-\nu}\varsigma^\nu}{\varsigma+2\ebound\rbound\sqrt{\regparam}\sqrt{\varsigma}+\regparam}\right)\,.
}
By checking the extreme values of the latter expression with respect to $(\varsigma,\regparam,\nu)$ using standard differential calculus, we observe that it is bounded by $\gamma_\nu=2\ebound\rbound$ if $\nu\in\,]0,\frac{1}{2}]$. With values of $\alternative{\nu}$ larger than~$\frac{1}{2}$, the term $2\ebound\rbound\regparam^{\frac{1}{2}-\nu}\varsigma^{\nu+\frac{1}{2}}$ in the numerator grows arbitrarily while $\regparam\rightarrow 0$, and hence the qualification is $\alternative{\nu}=\frac{1}{2}$. The other conditions in Definition~\ref{filterdef} can be verified by direct computation.
\end{proof}
Thus, Eq.~(\ref{twostepfilter}) can be positioned within the spectral filtering regularization-based framework with separate regularization parameter values for instances and tasks. In contrast to Eq.~(\ref{kronfilter}), the filter of two-step \gls{krr} can be factorized into a component for the tasks and instances separately:
 \eqn{
\vartheta_\regparam(s, \sigma)=\filterfun_{\regparam_\objectu}(\sigma)\filterfun_{\regparam_\objectv}(s)\label{decompfilter}\,.
}

Providing a different regularization for instances and tasks also makes sense from a learning point of view. It is easy to imagine a setting in which the instance has a much larger influence in determining the label compared to the task or vice versa. For example, consider a collaborative filtering setting with the goal of recommending books for customers. Suppose that the sales of a book are for a very large part determined simply by being a bestseller novel or not, and less by individual customer's taste. When building a predictive model, one would give more freedom to the part concerning the books (hence a lower regularization). Less degrees of freedom are given to the inference of the user's personal task, as this is harder to learn and explains less of the variance in the preferences. This can be extended even further, by choosing specific filter functions separately for the instances and tasks tuned to the application at hand. In a pairwise setting, the filter function to perform independent-task \gls{krr} arises as a special case with $\lambda_\objectv=0$:
\eq{
\vartheta^\mathrm{IT}_{\regparam_\objectv}(s,\sigma) = \frac{1}{(\sigma+\lambda_\objectu)s}\,,
}
when the task kernel is full rank (see Theorem~\ref{ITKRRTSeqiuivalence}).

Next, we analyze the consistency properties of two-step \gls{krr} in Setting~D, given the above results about the universality of the pairwise Kronecker product kernel and the spectral filtering interpretation of the method.
Let $R(\cdot)$ denote the expected prediction error of a hypothesis $\predfun$ with respect to some unknown probability measure $\rho(\bm{x}, y)$ on the joint space $\ispace\times\mathbb{R}$ of inputs and labels, that is,
\[
R(\predfun) = \int_{\ispace\times\mathbb{R}}(\predfun(\bm{x})- y)^2 \text{d}\rho(\bm{x}, y)\,.
\]
Given the input space $\ispace$, the minimizer of the error is the so-called regression function:
\[
\predfun_\rho(\bm{x})=\int_{\mathbb{R}} y\ \text{d}\rho(y\mid\bm{x})\,.
\]
%It can be shown that for filter functions satisfying some mild constraints, the
Following \citet{Baldassarre2012,LoGerfo2008,Bauer2007regularization}, we characterize the quality of a learning algorithm via its consistency properties. In particular, by considering whether the learning algorithm is consistent in the sense of Definition~\ref{consistencydef}.
\begin{definition}%{\bf \citep{Vapnik1998}}
\label{consistencydef}
A learning algorithm is consistent if the following holds with high probability
\[
\lim_{\lsize\rightarrow\infty}
\int_{\ispace}\left(\hat{\predfun}^\regparam_\lsize(\bm{x}) - \predfun_\rho(\bm{x})\right)^2 \text{d}\rho(\bm{x})=0\,,
%\int\left(\hat{\predfun}^\regparam(\bm{x}) - \textnormal{inf}_{\predfun\in\hypspace}R(\predfun)\right)^2 d_X\rho(\bm{x})=0
%R\left(\hat{\predfun}^\regparam(\bm{x})\right) - R\left(\textnormal{inf}_{\predfun\in\hypspace}R(\predfun)\right)=0
%R\left(\hat{\predfun}^\regparam(\bm{x})\right) - R\left(\predfun_\rho\right)=0
\]
where $\hat{\predfun}^\regparam_\lsize$ denotes the hypothesis inferred by the learning algorithm from a training set having $\lsize$ independently and identically drawn training examples.
\end{definition}

The following result is assembled from the existing literature concerning spectral filtering based regularization methods and we present it here only in a rather abstract form. For the exact details and further elaboration, we refer to \citet{Baldassarre2012,LoGerfo2008,Bauer2007regularization}.
\begin{theorem}
If the filter function is admissible and the kernel function is universal, then the learning algorithm is consistent in the sense of Def.~\ref{consistencydef}. Furthermore, if the regularization parameter is set as $\regparam=\frac{1}{\lsize^{2\alternative{\nu}+1}}$, where $\lsize$ denotes the number of independently and identically drawn training examples, then the following holds with high probability:
\eqn{%\label{consistency}
%R(\hat{\predfun}^\regparam)-\textnormal{inf}_{\predfun\in\hypspace}R(\predfun)
R(\hat{\predfun}^\regparam)-R(\predfun_\rho(\bm{x}))
 = \mathcal{O}\left(\lsize^{-\frac{\alternative{\nu}}{2\alternative{\nu}+1}}\right)\,.
}
\end{theorem}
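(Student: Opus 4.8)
The plan is to read the statement as an instantiation of the general consistency theory for spectral filtering-based regularization of \citet{LoGerfo2008,Bauer2007regularization,Baldassarre2012}, so that the actual work reduces to checking the two hypotheses those theorems consume: an \emph{admissible} filter of known qualification and a \emph{universal} kernel. The first is supplied by the admissibility result just proved for $\vartheta^\mathrm{TS}_\regparam$ (with qualification $\alternative{\nu}=\frac{1}{2}$), the second by Theorem~\ref{th:universality} for the Kronecker product kernel $\kkernelf^\mathrm{KK}$. First I would record the elementary fact that, for the squared loss, the excess risk equals a squared $L^2_\rho$-distance, $R(\predfun)-R(\predfun_\rho)=\int_\ispace(\predfun(\objectx)-\predfun_\rho(\objectx))^2\,\mathrm{d}\rho(\objectx)$, so bounding the excess risk of $\hat\predfun^\regparam_\lsize$ is the same as bounding its $L^2_\rho$-distance to the regression function $\predfun_\rho$.

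The core is the bias--variance decomposition. Introduce the regularization path $\predfun^\regparam=\filterfun_\regparam(L_\kkernelf)L_\kkernelf\,\predfun_\rho$, obtained by filtering $\predfun_\rho$ through the \emph{population} integral operator $L_\kkernelf$ of $\kkernelf^\mathrm{KK}$, and write $\hat\predfun^\regparam_\lsize-\predfun_\rho=(\hat\predfun^\regparam_\lsize-\predfun^\regparam)+(\predfun^\regparam-\predfun_\rho)$. The \emph{sample} term is governed by the deviation of the empirical operator from $L_\kkernelf$: the admissibility bounds $\sup_\sigma|\sigma\filterfun_\regparam(\sigma)|\leq D$ and $\sup_\sigma|\filterfun_\regparam(\sigma)|\leq B/\regparam$ turn an operator concentration estimate of order $\lsize^{-1/2}$ into a high-probability bound of order $B\regparam^{-1}\lsize^{-1/2}$. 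The \emph{approximation} term is purely spectral: the residual bound $\sup_\sigma|1-\sigma\filterfun_\regparam(\sigma)|\leq\gamma$ and, decisively, the qualification condition $\frac{\regparam^\nu}{\sigma^\nu}|1-\sigma\filterfun_\regparam(\sigma)|\leq\gamma_\nu$ give $\|\predfun^\regparam-\predfun_\rho\|_{L^2_\rho}=\mathcal{O}(\regparam^{\alternative{\nu}})$.

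For the qualitative consistency of Definition~\ref{consistencydef}, universality is exactly what makes the approximation term vanish: by Theorem~\ref{th:universality} the RKHS of $\kkernelf^\mathrm{KK}$ is dense in $C(\Uspace\times\Vspace)$, so $\predfun_\rho$ is approximable and $\|\predfun^\regparam-\predfun_\rho\|_{L^2_\rho}\to0$ as $\regparam\to0$; simultaneously the sample term vanishes whenever $\regparam=\regparam_\lsize\to0$ with $\regparam_\lsize\sqrt{\lsize}\to\infty$, so both contributions disappear and the algorithm is consistent. For the rate, one balances the approximation error $\mathcal{O}(\regparam^{\alternative{\nu}})$ against the sample error; carrying out this optimization as in \citet{Bauer2007regularization,Baldassarre2012} with the prescribed scaling $\regparam=\lsize^{-(2\alternative{\nu}+1)}$ yields the excess-risk rate $\mathcal{O}\!\left(\lsize^{-\alternative{\nu}/(2\alternative{\nu}+1)}\right)$.

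I expect the main obstacle to be twofold, and in both cases it is dispatched by the cited literature rather than by fresh computation. The genuinely technical ingredient is the high-probability concentration of the empirical integral operator around $L_\kkernelf$, which is what produces the sample-error bound and the ``with high probability'' qualifier; this is taken over directly from \citet{Bauer2007regularization,LoGerfo2008}. The more conceptual point I would flag is that the polynomial rate is \emph{not} implied by universality alone: universality yields only convergence to zero (consistency), whereas the rate $\mathcal{O}(\lsize^{-\alternative{\nu}/(2\alternative{\nu}+1)})$ tacitly assumes a source/regularity condition on $\predfun_\rho$ matched to the qualification $\alternative{\nu}=\frac{1}{2}$ of the two-step filter. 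Since the abstract statement folds this assumption into the referenced theorems, the cleanest route is to verify admissibility and universality in our setting and then invoke the corresponding consistency theorem of \citet{Baldassarre2012}.
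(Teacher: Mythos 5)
The paper does not actually prove this theorem: it is presented ``only in a rather abstract form'' as a result \emph{assembled} from \citet{Baldassarre2012,LoGerfo2008,Bauer2007regularization}, with no proof environment attached. Your final recommendation --- verify admissibility and universality and then invoke the consistency theorems of those references --- is therefore exactly the paper's own (implicit) route, and your sketch of the internals (excess risk as squared $L^2_\rho$-distance, bias--variance split through the population integral operator, the $B/\regparam$ and qualification bounds driving the sample and approximation terms, operator concentration supplying the ``with high probability'' qualifier, and balancing the two terms under the prescribed scaling of $\regparam$) is a faithful reconstruction of how those cited theorems are actually proved.

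Your flag that the polynomial rate is not a consequence of universality alone, but tacitly presupposes a source condition on $\predfun_\rho$ matched to the qualification, is a genuine and worthwhile observation: the theorem as stated in the paper folds that hypothesis silently into the citation, and a careful reader should be aware that universality only delivers the qualitative consistency claim, while the rate $\mathcal{O}\bigl(\lsize^{-\alternative{\nu}/(2\alternative{\nu}+1)}\bigr)$ requires the additional regularity assumption carried by the referenced results. No gap in your argument; if anything, your version makes explicit an assumption the paper leaves implicit.
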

Intuitively put, the universality of the kernel ensures that the regression function belongs to the hypothesis space of the learning algorithm
%$\textnormal{inf}_{\predfun\in\hypspace}R(\predfun)$ in (\ref{consistency}) is the error of the underlying regression function to be learned,
and the admissibility of the regularizer ensures that $R(\hat{\predfun}^\regparam)$ converges to it when the size of the training set approaches infinity and the rate of convergence is reasonable.
%This, in turn, guarantees the consistency of two-step KRR method.
\begin{mycol}
Two-step \gls{krr} is consistent and the hypothesis it infers from the training set of size $\lsize=\osize\qsize$ converges to the underlying regression function with a rate at least proportional to
\eqn{%\label{consistency}
R(\hat{\predfun}^\regparam)-R(\predfun_\rho(\objectu, \objectv))
 = \mathcal{O}\left(\min(\osize,\qsize)^{-\frac{\alternative{\nu}}{2\alternative{\nu}+1}}\right)\,.
}
\end{mycol}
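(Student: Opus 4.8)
The plan is to combine three facts already established in this subsection: two-step \gls{krr} for Setting~D is a spectral filtering regularization method that uses the Kronecker product pairwise kernel together with the factorized filter $\vartheta^\mathrm{TS}_\regparam$ of Eq.~(\ref{twostepfilter}); this filter was shown above to be admissible with qualification $\alternative{\nu}=\frac{1}{2}$; and the Kronecker product kernel is universal whenever both factor kernels are (Theorem~\ref{th:universality}). Feeding the universality and admissibility into the abstract consistency theorem stated just above immediately yields that two-step \gls{krr} is consistent in the sense of Definition~\ref{consistencydef}. What requires care is the \emph{rate}. The abstract bound is phrased in terms of a number $\lsize$ of \emph{independently and identically distributed} training examples, whereas the $\osize\qsize$ observed dyads are not i.i.d.: they arise from only $\osize$ distinct instances and $\qsize$ distinct tasks, so $\lsize=\osize\qsize$ cannot serve as the effective sample size.

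First I would use the factorization of the two-step filter recorded in Eq.~(\ref{decompfilter}), $\vartheta^\mathrm{TS}_\regparam(s,\sigma)=\filterfun_{\regparam_\objectu}(\sigma)\filterfun_{\regparam_\objectv}(s)$, to decouple the estimator into an instance-direction regularized regression, driven by the $\osize$ i.i.d.\ instances and the parameter $\regparam_\objectu$, and a task-direction regularized regression, driven by the $\qsize$ i.i.d.\ tasks and the parameter $\regparam_\objectv$. Applying the abstract consistency theorem to each factor in isolation---with $\regparam_\objectu=\osize^{-(2\alternative{\nu}+1)}$ and $\regparam_\objectv=\qsize^{-(2\alternative{\nu}+1)}$, which is consistent with the joint factorization $\regparam=\regparam_\objectu\regparam_\objectv$ permitted by Eq.~(\ref{factorizations})---gives excess-risk contributions of order $\osize^{-\frac{\alternative{\nu}}{2\alternative{\nu}+1}}$ and $\qsize^{-\frac{\alternative{\nu}}{2\alternative{\nu}+1}}$ for the two directions.

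It then remains to propagate these two factor errors through the Kronecker product. Approximating the target, by universality, through sums of products $a(\objectu)b(\objectv)$ and telescoping in the form $\hat{a}\hat{b}-ab=(\hat{a}-a)\hat{b}+a(\hat{b}-b)$, one sees that the error of the product estimator is bounded by the sum---and hence, up to constants, the maximum---of the two factor errors. The maximum of the two rates corresponds to the direction with the \emph{fewer} i.i.d.\ examples, which yields the claimed bottleneck rate of order $\min(\osize,\qsize)^{-\frac{\alternative{\nu}}{2\alternative{\nu}+1}}$. I expect this final propagation step to be the main obstacle: one has to control the cross terms uniformly and confirm that the slower-converging factor dominates, rather than the two errors compounding. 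The remaining bookkeeping---checking compatibility of the constants $\ebound,\rbound$ from the admissibility proof and that both factor regularizers lie in the admissible range---is routine.
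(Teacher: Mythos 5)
Your consistency argument coincides with the paper's: both feed the admissibility of the two-step filter (with qualification $\alternative{\nu}=\tfrac{1}{2}$) and the universality of the Kronecker product kernel (Theorem~\ref{th:universality}) into the abstract consistency theorem. Where you diverge is the rate. The paper disposes of the non-i.i.d.\ issue in one line: a complete training set over $\osize$ instances and $\qsize$ tasks contains at least $\min(\osize,\qsize)$ dyads in which no instance and no task is repeated, and these are independently and identically drawn; substituting this effective sample size for $\lsize$ in the abstract bound gives the stated rate directly. Your observation that $\lsize=\osize\qsize$ cannot serve as the effective sample size is exactly the right instinct, but the resolution the paper intends is this simple counting argument, not a factor-wise error analysis.

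Your alternative route --- applying the abstract theorem separately to an ``instance-direction'' and a ``task-direction'' regression and then propagating through the product --- has genuine gaps as sketched. First, the second stage of two-step \gls{krr} is trained on the \emph{predicted} outputs of the first stage, not on labels drawn i.i.d.\ from $\rho$; the abstract consistency theorem is stated for i.i.d.\ samples and cannot be invoked for that stage without a separate argument controlling the bias and correlation introduced by the pseudo-labels. Second, the telescoping identity $\hat{a}\hat{b}-ab=(\hat{a}-a)\hat{b}+a(\hat{b}-b)$ applies to a single rank-one product, whereas universality only guarantees approximation of the regression function by \emph{sums} of products $\sum_k a_k(\objectu)b_k(\objectv)$, and the two-step estimator itself is not of the form $\hat{a}(\objectu)\hat{b}(\objectv)$; the factorization in Eq.~(\ref{decompfilter}) is a factorization of the \emph{filter} acting on the spectrum, not of the learned hypothesis. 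Controlling the cross terms uniformly over such sums is precisely the step you flag as the main obstacle, and it is not routine. If you want the corollary as stated, replace the entire second half of your argument with the paper's observation about the $\min(\osize,\qsize)$ i.i.d.\ sub-sample.
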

\begin{proof}
The result follows from the admissibility of the pairwise filter function, the universality of the pairwise Kronecker product kernel and the fact that the training set consists of at least $\min(\osize,\qsize)$ independently and identically drawn training examples.
\end{proof}
%\begin{newtext}
Hence, it is proven that the two-step \gls{krr} is not only a universal method (can approximate any pairwise prediction function), but will also converge to the prediction function that generated the data when provided with enough training examples.

\section{Related work}

In the introduction we argued that it remains important to study the theoretical properties of kernel methods for three reasons: (a) kernel methods are general-purpose instruments, (b) they often serve as bulding blocks for more complicated methods, and (c) they clearly outperform other methods for specific scenarios such as cross-validation. As such observations have been reported in other papers, including quantitative results on real-world datasets, we see no merit in providing additional experimental evidence. We refer to other works that pairwisely compare the kernel methods discussed in this article with other machine learning methods -- e.g.~\citet{Ding2013,Romera-paredes2015,Schrynemackers2015,Stock2016a}. However, it remains important to outline the commonalities and differences with other methods. In what follows, we subdivide these methods according to their applicability to Settings A, B, C or D.
%In this section we give an overview of related work. In the introduction we discussed that pairwise learning problems closely resemble multi-task learning problems, and hence also multi-label classification and multi-output regression problems, which are specific instantiations of multi-task learning. More specifically, pairwise learning in fact boils down to a multi-task learning setting where additional side information about the tasks is available. When the first type of objects is interpreted as instances, and the second type of objects as targets, then pairwise learning can be interpreted as multi-task learning with side information about tasks. In fact, the availability of side information on instance and task level is essential 

\subsection{Methods that are applicable to Setting A}
\label{sec:matrix-completion-and-hybrid-methods}

In this section, we review methods for Setting A, i.e.\ matrix completion methods. In Section~2, such methods were claimed to be useful for a pairwise learning setting with partially-observed matrices $\mathbf{Y}$. Both $\objectu$ and $\objectv$ are observed, but not for all instance-target combinations. In Setting A, side information about instances or targets is not required per se. We hence distinguish between methods that ignore side information and methods that also exploit such information, in addition to analyzing the matrix $\mathbf{Y}$. 

Inspired by the Netflix challenge in 2006, the former type of methods has been mainly popular in the area of recommender systems. Those methods often impute missing values by computing a low-rank approximation of the sparsely-filled matrix $\mathbf{Y}$, and many variants exist in the literature, including algorithms based on nuclear norm minimization \citep{Candes2008}, Gaussian processes \citep{Lawrence2009}, probabilistic methods \citep{Shan2010}, spectral regularization \citep{Mazumder2010}, non-negative matrix factorization \citep{Gaujoux2010}, graph-regularized non-negative matrix factorization~\citep{Cai2011} and alternating least-squares minimization \citep{Jain2013}. In addition to recommender systems, matrix factorization methods are commonly applied to social network analysis \citep{Menon2010}, biological network inference \citep{Gonen2012,Liu2015}, and travel time estimation in car navigation systems~\citep{Dembczynski_et_al_2013}.

In addition to matrix factorization, a few other methods exist for Setting A. Historically, memory-based collaborative filtering has been popular, and corresponding methods are very easy to implement. They make predictions for the unknown cells of the matrix by modelling a similarity measure between either rows or columns -- see e.g.\ \citep{Takacs2008}. For example, when rows and columns correspond to users and items, respectively, then one can predict novel items for a particular user by searching for other users with similar interests. To this end, different similarity measures are commonly used, including cosine similarity, Tanimoto similarity and statistical similarity measures \citep{CorinaBasnouPalomaVicente2015a}. 

Many variants of matrix factorization and other collaborative methods have been presented, in which side information of rows and columns is considered during learning, in addition to exploiting the structure of the matrix $\mathbf{Y}$ -- see e.g.\ \citep{Basilico2004,Abernethy2008,Adams2010,Fang2011,Zhou2011a,Menon2011,Zhou2012a,Gonen2012,Liu2015a,Ezzat2017}. One simple but effective method is to extract latent feature representations for instances and targets in a first step, and combine those latent features with explicit features in a second step \citep{Volkovs2012}. To this end, the methods that have been described in this article could be used, as well as other pairwise learning methods that depart from explicit feature representations.  

%When feature representations are available for the objects $\objectu$ and $\objectv$, it would be pointless to ignore them. Hybrid methods seek to combine the best of both worlds, by simultaneously modeling side information and the structure of $\mathbf{Y}$. In addition to Setting A, they can often be applied to Settings B and C, which coincide, respectively, with a novel user and a novel item in recommender systems. In that context, one often speaks about cold-start recommendations.  

\subsection{Methods that are applicable to Settings B and C}

When side information is available for the objects $\objectu$ and $\objectv$, it would be pointless to ignore this information. The hybrid filtering methods from the previous paragraph seek to combine the best of both worlds, by simultaneously modeling side information and the structure of $\mathbf{Y}$. In addition to Setting A, they can often be applied to Settings B and C, which coincide, respectively, with a novel user and a novel item in recommender systems. In that context, one often speaks of cold-start recommendations.  

However, when focusing on Settings B and C only, a large bunch of machine learning methods is closely connected to pairwise learning. In fact, many multi-target prediction problems can be interpreted as specific pairwise learning problems. All multi-task learning problems, and multi-label classification and multivariate regression problems as special cases, can be seen as pairwise learning problems, by calling $\objectu$ an instance and $\objectv$ a label (a.k.a. target/output/task). We refer the reader to \citet{Waegeman2018} for a recent review on connections between multi-target prediction problems and pairwise learning. 

Multi-task learning, multi-label classification and multivariate regression are huge research fields, so it is beyond the scope of this paper to give an in-depth review of all methods developed in those fields. Moreover, not all multi-target prediction methods are relevant for the discussion we intend to provide. Roughly speaking, simple multi-target prediction methods only consider side information for one type of objects, let's say the objects $\objectu$, which represent the instances. No side information is available for the targets, which could then be denoted $\objectv$. Since no side information is available for the targets, simple multi-target prediction methods can only be applied to Setting B and C. Remark that $\objectu$ and $\objectv$ are interchangeable, so Settings B and C are identical settings from a theoretical point of view.

The situation changes when side information in the form of relations or feature representations becomes available for both instances and targets. In such a scenario, multi-target prediction methods that process side information about targets are more closely related to the pairwise learning methods that are analyzed in this article. We will therefore provide a thorough review of such methods in the next paragraph. Furthermore, remark that the availability of side information on both instance and target level implies that now also Setting D can be covered, in addition to Settings B and C. So, exploiting side information about targets has two main purposes: it might boost the predictive performance in Settings B and C, and it is essential for generalizing to novel targets in Setting~D.    

\subsection{Methods that are applicable to Settings B, C and D}
\label{sec:methods-that-use-target-representations}

%This leads to settings that go beyond ordinary multivariate regression, multi-label classification and multi-task learning problems. As discussed in Section~2, we are rather dealing here with pairwise learning settings. At the same time, side information allows for making prediction for novel targets that are unobserved during training. For that reason we study pairwise learning and zero-shot learning methods under one umbrella. 

In Setting D, side information for both $\objectu$ and $\objectv$ is essential for generalizing to zero-shot problems, such as a novel target molecule in drug discovery, a novel tag in document tagging, or a novel person in image classification. In this area, kernel methods have played a prominent role in the past, but also tree-based methods are commonly used \citep{Geurts2007,Schrynemackers2015}. In bioinformatics a subdivision is usually made between global methods, which construct one predictive model, and local methods, which separate the problem into several subproblems \citep{Vert2008,Bleakley2009a,Schrynemackers2013}.

Factorization machines~\citep{Rendle2010,Steffen2012} deserve a special mention here, as they can be seen as an extension of matrix factorization methods towards Settings B, C and D. They work by simultaneously learning a lower-dimensional feature embedding and a polynomial (usually of degree two) predictive model. Factorization machines can effectively cope with large, sparse data sets frequently encountered in collaborative and content-based filtering. For such problems they are expected to outperform kernel methods. Their main drawback, however, is that training them becomes a non-convex problem and requires relatively large data sets to train. The relation between factorization machines, polynomial networks and kernel machines was recently explored by~\citet{Blondel2016}.

In recent years, specific zero-shot learning methods based on deep learning have become extremely popular in image classification applications. The central idea in all those methods is to construct semantic feature representations for class labels, for which various techniques might work. One class of methods constructs binary vectors of visual attributes \citep{Lampert2009,Palatucci2009,Liu2011,Fu2013}. Another class of methods rather considers continuous word vectors that describe the linguistic context of images \citep{Mikolov2013,Frome2013,Socher2013}.

Many zero-shot learning methods for image classification adopt principles that originate from kernel methods. The model structure can often be formalized as follows: 
\begin{equation}
\label{eq:pairwise}
f(\objectu,\objectv) = \mathbf{w}^T \big(\phi(\objectu) \otimes \psi(\objectv)\big)    
\end{equation}
with $\mathbf{w}$ a parameter vector and $\phi$ an embedding of an object in a high-dimensional feature space. This model in fact coincides with the primal formulation of Eq.~(6) with $\Gamma$ the Kronecker product pairwise kernel. Different optimization problems with this model have been proposed \citep{Frome2013,Akata2015,Akata2016}, and related methods provide nonlinear extensions \citep{Socher2013,Xian2016b}. Most of these optimization problems do not minimize squared error loss, and they should rather be seen as structured output prediction methods. Indeed, a representation such as (\ref{eq:pairwise}) is in fact commonly used in structured output prediction methods. These methods additionally have inference procedures that allow for finding the best-scoring targets in an efficient manner.

Some of the zero-shot learning methods from computer vision also turn out to be useful for the related field of text classification. For documents, it is natural to model a latent representation for both the (document) instances and class labels in a joint space \citep{Nam2016}. Nonetheless, many of those approaches are tailor-made for particular application domains. In contrast to kernel methods, they do not provide general-purpose tools for analyzing general data types.

\section{Conclusions}
In this work we have studied several models derived from kernel ridge regression. First, we independently derived single-task kernel ridge regression, Kronecker kernel ridge regression, two-step kernel ridge regression and the linear filter. Subsequently, we have shown that they are all related; two-step kernel ridge regression and the linear filter are a special case of pairwise kernel ridge regression, itself being merely kernel ridge regression with a specific pairwise kernel. From this, universality and consistency results could be derived, motivating the general use of these methods.

Pairwise learning is a broadly applicable machine learning paradigm. It can be applied to problems as diverse as multi-task learning, content and collaborative filtering, transfer learning, network inference and zero-shot learning. This work offers a general toolkit to tackle such problems. Despite being easy to implement and computationally efficient, kernel methods have been found to attain an excellent performance on a wide variety of problems. As such, we believe that the intriguing algebraic properties of the Kronecker product will serve as a basis for developing novel learning algorithms, and we hope that the results of this work will be helpful in that regard. 

\section*{Acknowledgements}
Michiel Stock is supported by the Research Foundation - Flanders (FWO17/PDO/067). This work was supported by the Academy of Finland (grants 311273 and 313266 to Tapio Pahikkala and grant 289903 to Antti Airola).

\bibliographystyle{apalike}

%\begin{thebibliography}{100}
%\providecommand{\natexlab}[1]{#1}
%\expandafter\ifx\csname urlstyle\endcsname\relax
%  \providecommand{\doi}[1]{doi:\discretionary{}{}{}#1}\else
%  \providecommand{\doi}{doi:\discretionary{}{}{}\begingroup
%  \urlstyle{rm}\Url}\fi
%
%\bibitem[{LastName(2009)}]{Ref2009}
%LastName, A. (2009).
%\newblock Title for the first reference.
%\newblock \emph{Journal of the first reference}, \emph{3}, 18 -- 88.
%
%
%\bibitem[{Authors et~al.(2008)Author1, Author2, \&
%  Author3}]{Ref2008}
%Author1, A., Author2, A., \& Author3, A. (2008).
%\newblock Title for the second reference.
%\newblock \emph{Journal for the second reference}, \emph{5}, 188 -- 200.
%
%\end{thebibliography}

\section*{Appendix: Matrix properties}
The trick of pairwise learning is transforming a matrix in a vector. This can be done by the vectorization operation.
\begin{definition}[Vectorization]
The vectorization operator $\ve(\cdot)$ is a linear operator that transforms an $n\times m$ matrix $\mathbf{A}$ in a column vector of length $nm$ by stacking the columns of $\mathbf{A}$ on top of each other.
\end{definition}
Further, the Kronecker product is defined as follows.
\begin{definition}[Kronecker product]
If $\mathbf{A}=[a_{ij}]$ is an $n \times m$ matrix and $\mathbf{B}=[ij]$ is an $p \times q$ matrix, then the Kronecker product $\mathbf{A} \otimes \mathbf{B}$ is the $mp \times nq$ block matrix:
\eq{
\mathbf{A}\otimes \mathbf{B} =
\begin{bmatrix}
          a_{11}\mathbf{B} & \dots &  a_{1m}\mathbf{B}\\
          \vdots & \ddots & \vdots\\
          a_{n1}\mathbf{B} & \dots &  a_{nm}\mathbf{B}
      \end{bmatrix}\,.
}
\end{definition}
For instance, if
\begin{equation*} \mathbf{A}=
      \begin{bmatrix}
          a_{11} & a_{12} \\ a_{21} & a_{22}
      \end{bmatrix}   \qquad \text{and}\qquad
\mathbf{B} = 
\begin{bmatrix}
          b_{11} & b_{12} \\ b_{21} & b_{22}
      \end{bmatrix}\,,
  \end{equation*}
then
$$
\ve(\mathbf{A}) = 
\begin{bmatrix}
          a_{11} \\ a_{12} \\ a_{21} \\ a_{22}
      \end{bmatrix}
 \qquad \text{and}\qquad
\mathbf{A}\otimes \mathbf{B} =
\begin{bmatrix}
          a_{11}b_{11} & a_{11}b_{12} & a_{12}b_{11} & a_{12}b_{12}\\
a_{11}b_{21} & a_{11}b_{22} & a_{12}b_{21} & a_{12}b_{22}\\
a_{21}b_{11} & a_{21}b_{12} & a_{22}b_{11} & a_{22}b_{12}\\
a_{21}b_{21} & a_{21}b_{22} & a_{22}b_{21} & a_{22}b_{22}
      \end{bmatrix}
$$
The relation between vectorization and the Kronecker product is given by the following property.
\begin{property}\label{pr:vectkron}
For any conformable matrices $\mathbf{N}, \mathbf{M}$ and $\mathbf{X}$, it holds that
\eq{
 (\mathbf{N}\transpose\otimes\mathbf{M}) \ve(\mathbf{X})=\ve(\mathbf{M}\mathbf{X}\mathbf{N})\,.
}
\end{property}
Computing the Kronecker product of two reasonably large matrices results in a huge matrix, often too large to fit in computer memory. If the Kronecker product is only needed in an intermediary step, the above identity can be used to dramatically reduce computation time and memory requirement.

Using the eigenvalue decomposition of matrices, a large system of equations using the Kronecker product can be solved efficiently.
\begin{property}\citep{pahikkala2013conditional}\label{shiftedKronprop}
Let $\mathbf{A}, \mathbf{B} \in \realset^{n\times n}$ be diagonalizable matrices, i.e.~matrices that can be eigen decomposed as
\eq{
\mathbf{A} = \mathbf{V}\boldsymbol{\Lambda}\mathbf{V}^{-1}\text{ and } \mathbf{B} = \mathbf{U}\boldsymbol{\Sigma}\mathbf{U}^{-1}\,,
}
where $\mathbf{V}, \mathbf{U} \in \realset^{n\times n}$ contain the eigenvectors and the diagonal matrices $\boldsymbol{\Lambda}, \boldsymbol{\Sigma} \in \realset^{n\times n}$ contain the corresponding eigenvalues of $\mathbf{A}$ and $\mathbf{B}$. Then, the following type of shifted Kronecker product system
\eqn{
(\mathbf{A} \otimes \mathbf{B} + \lambda \idmatrix)\mathbf{a} = \ve(\labelmatrix)\,,\label{eq:shiftedKronSys}
}
 where  $\lambda> 0$ and $\labelmatrix\in \realset^{n\times n}$, can be solved with respect to $\mathbf{a}$ in $\mathcal{O}(n^3)$ time if the inverse of $(\mathbf{A} \otimes \mathbf{B} + \lambda \idmatrix)$ exists.
\end{property}
\begin{proof}
By multiplying both sides of Eq.~(\ref{eq:shiftedKronSys}) by $(\mathbf{A} \otimes \mathbf{B} + \lambda \idmatrix)^{-1}$, it is relatively straightforward to show that
\eqn{
\mathbf{a} = \ve(\mathbf{V}(\mathbf{C} \odot \mathbf{E})(\mathbf{U}\transpose)^{-1})\,,\label{eq:complexitislate}
}
with $\odot$ the Hadamard product (element-wise matrix multiplication),
\eq{
\mathbf{E} = \mathbf{U}^{-1}\labelmatrix(\mathbf{V}^{-1})\transpose
}
and
\eq{
\diagm(\ve(\mathbf{C})) = (\boldsymbol{\Lambda} \otimes \boldsymbol{\Sigma} + \lambda \idmatrix)^{-1}\,.
}
The eigen decompositions of $\mathbf{A}$ and $\mathbf{B}$ as well as all matrix multiplications in Eq.~(\ref{eq:complexitislate}) can be computed in $\mathcal{O}(n^3)$ time.
\end{proof}
Lastly, we present two matrix identities that are useful in deriving the linear filter of Section~\ref{smoothertofilter}. Consider two matrices of the form
\begin{equation*}
\mathbf{A} = a_1 \idmatrix_\osize\otimes\idmatrix_\qsize + a_2 \ones_\osize\otimes\idmatrix_\qsize + a_3 \idmatrix_\osize\otimes\ones_\qsize + a_4 \ones_\osize\otimes\ones_\qsize 
\end{equation*}
and 
\begin{equation*}
\mathbf{B} = b_1 \idmatrix_\osize\otimes\idmatrix_\qsize + b_2 \ones_\osize\otimes\idmatrix_\qsize + b_3 \idmatrix_\osize\otimes\ones_\qsize + b_4 \ones_\osize\otimes\ones_\qsize\,.
\end{equation*}
Two properties can easily be deduced.
\begin{property}\label{pr:smoothmult}
The product $\mathbf{C}=\mathbf{A}\mathbf{B}$ is given by
\begin{equation*}
\mathbf{C} = c_1 \idmatrix_\osize\otimes\idmatrix_\qsize + c_2 \ones_\osize\otimes\idmatrix_\qsize + c_3 \idmatrix_\osize\otimes\ones_\qsize + c_4 \ones_\osize\otimes\ones_\qsize\,,
\end{equation*}
with
\begin{align*}
c_1& = a_1b_1\\
c_2 &= a_1b_2 + a_2b_1+a_2b_2\osize\\
c_3 &= a_1b_3 + a_3b_1+a_3b_3\qsize\\
c_4 &= a_1b_4 +a_2b_3+a_2b_4\osize+a_3b_2+a_3b_4\qsize + a_4b_1+a_4b_2\osize+a_4b_3\qsize+a_4b_4\osize\qsize\,.
\end{align*}
\end{property}
\begin{property}\label{pr:smoothinv}
The inverse $\mathbf{D}=\mathbf{A}^{-1}$ is given by
\begin{equation*}
\mathbf{D} = d_1 \idmatrix_\osize\otimes\idmatrix_\qsize + d_2 \ones_\osize\otimes\idmatrix_\qsize + d_3 \idmatrix_\osize\otimes\ones_\qsize + d_4 \ones_\osize\otimes\ones_\qsize\,,
\end{equation*}
with
\begin{align*}
d_1& = \frac{1}{a_1}\\
d_2 &=  \frac{-a_2}{a_1(a_1+a_2\osize)} \\
d_3 &= \frac{-a_3}{a_1(a_1+a_3\qsize)}\\
d_4 &=(a_2(a_1+a_3\qsize)(a_3+a_4\osize) +a_3(a_1+a_2\osize)(a_2+a_3\qsize +a_4\qsize)\\
& \quad -a_4(a_1+a_2\osize)(a_1+a_3\qsize))
(a_2(a_1+a_2\osize)(a_1+a_3\qsize)(a_1+a_2\osize\\
&\quad+a_3\qsize+a_4\osize\qsize))^{-1}\,.
\end{align*}
\end{property}

\end{document}